\newif\if@restonecol
\title{Are Logistic Models Really Interpretable?}
\author{
    Danial Dervovic\textsuperscript{\rm 1}
    \and
    Freddy L\'ecu\'e\textsuperscript{2}
    \and
    Nicol\'as Marchesotti\textsuperscript{3}
    \And
    Daniele Magazzeni\textsuperscript{3}\\
    \affiliations
    \textsuperscript{\rm 1}JP Morgan AI Research, Edinburgh, UK \\  
    \textsuperscript{\rm 2}JP Morgan AI Research, New York City, NY, USA \\
    \textsuperscript{\rm 3}JP Morgan AI Research, London, UK \\
    \emails
    \{danial.dervovic, freddy.lecue, nicolas.p.marchesotti, daniele.magazzeni\}@jpmchase.com
    }
\definecolor{codegreen}{rgb}{0,0.6,0}
\definecolor{codegray}{rgb}{0.5,0.5,0.5}
\definecolor{codepurple}{rgb}{0.58,0,0.82}
\definecolor{backcolour}{rgb}{0.95,0.95,0.92}
\lstdefinestyle{mystyle}{
    backgroundcolor=\color{backcolour},   
    commentstyle=\color{codegreen},
    keywordstyle=\color{magenta},
    numberstyle=\tiny\color{codegray},
    stringstyle=\color{codepurple},
    basicstyle=\ttfamily\scriptsize,
    breakatwhitespace=false,         
    breaklines=true,                 
    captionpos=b,                    
    keepspaces=true,                 
    showspaces=false,                
    showstringspaces=false,
    showtabs=false,                  
    tabsize=2
}
\pgfplotsset{compat=newest}
\newcites{SM}{Supplementary References}
\theoremstyle{plain}
\newtheorem{theorem}{Theorem}[section]
\newtheorem{proposition}[theorem]{Proposition}
\newtheorem{lemma}[theorem]{Lemma}
\newtheorem{remark}[theorem]{Remark}
\theoremstyle{definition}
\newtheorem{definition}[theorem]{Definition}
\newcommand{\projunit}{\ensuremath\Pi_{[0, 1]}}
\newcommand{\piecelin}{\ensuremath\mathrm{PL}_3}
\begin{document}

\maketitle

\begin{abstract}
The demand for open and trustworthy AI models points towards widespread publishing of model weights. Consumers of these model weights must be able to act accordingly with the information provided. That said, one of the simplest AI classification models, Logistic Regression (LR), has an unwieldy interpretation of its model weights, with greater difficulties when extending LR to generalised additive models. In this work, we show via a User Study that skilled participants are unable to reliably reproduce the action of small LR models given the trained parameters. As an antidote to this, we define Linearised Additive Models (LAMs), an optimal piecewise linear approximation that augments any trained additive model equipped with a sigmoid link function, requiring no retraining. We argue that LAMs are more interpretable than logistic models -- survey participants are shown to solve model reasoning tasks with LAMs much more accurately than with LR given the same information. Furthermore, we show that LAMs do not suffer from large performance penalties in terms of ROC-AUC and calibration with respect to their logistic counterparts on a broad suite of public financial modelling data.
\end{abstract}

\section{Introduction}
\label{sec:intro}
In high-stakes domains such as finance and healthcare, there is renewed interest in \emph{inherently interpretable} models~\cite{Lipton2018,molnar2022,DARPA}, where the model form is such that it admits useful explanations of its output without any post-processing.
Calls for transparency of algorithms being used on the public are widespread~\cite{veale2018fairness}.
Within finance there is already increased regulatory scrutiny~\cite{OCC2021,EUAIAct,RFI2021} being introduced with regards to the usage of AI, which could extend in the future -- within certain contexts -- to require full algorithmic transparency, i.e. sharing model coefficients.

One of the prototypical inherently interpretable classification models often used as a baseline is Logistic Regression (LR)~\cite{molnar2022}.
Other additive models such as Generalised Additive Models (GAMs) and variants thereof~\cite{EBM,gkolemis2023regionally} generalise LR to have more flexibility~\cite{hastie2009elements}  and are also considered to be interpretable. 
As with LR, such models entail evaluating a real-valued function of the input data in logit, or log-odds space, that is subsequently transformed into probability space via a non-linear logistic link function.
We call this class of models \emph{logistic models} -- a rigorous definition is given in Definition~\ref{def:logistic_additive_model}. 
In some sense, logistic models can be thought of as \emph{reasoning in logit space}, in that the model weights naturally find their interpretation in terms of log-odds rather than probabilities, the units of the eventual model output.

Logistic models are now ubiquitous within Explainable AI (XAI) \cite{EBM,wells2021,vaughan2018explainable} but the literature is scant on evaluating the interpretability of these models. Indeed, there is a small amount of evidence to the contrary~\cite{harris,vonHippel}, with no more thorough study to the authors' knowledge.
To what extent is this family of models truly interpretable?
The present work aims to (at least partially) answer this question.
We show via a User Study that for at least one definition of interpretability, based on Human-Grounded Evaluation~\cite{InterpretabilityRigorousScience}, such models provide limited and misleading explanations. 
For contexts where a certain level of interpretability is required we propose a remedy, \emph{Linearised Additive Models (LAM)}, that largely keeps the properties of any base logistic model the same, while dispensing with the non-linearities that cause confusion when using model weights as explanations. 

\paragraph{Contributions.} We outline the primary contributions below.
\setlist{nolistsep}
\begin{enumerate}[noitemsep,leftmargin=.75cm]
    \item \textbf{Identification of interpretability limitations for LR.} A concrete motivating example demonstrating that model explanations provided in log-odds can be difficult for humans to interpret.
    \item \textbf{Linearised Additive Models (LAM).} An efficient procedure to convert any trained logistic additive model that reasons in log odds to one that reasons directly about probabilities, without any retraining. 
    For the special case of LR, LAM is rigorously proved to be the optimal approximation out of a large class of possible models.
    \item \textbf{Empirical evaluation of performance preservation}. On a collection of public datasets from credit modelling, we establish that there is only a very small penalty in classification performance and a somewhat larger -- but still small -- penalty in calibration incurred for using LAMs versus logistic models.
    \item \textbf{User evaluation.} We conduct a user study with $N=36$ participants, concluding via Human-Grounded Evaluation that LAMs are more interpretable than logistic models, as suggested by the motivating example.
    The measured outcomes of the user study are statistically significant.
\end{enumerate}

\paragraph{Related Work.}
There is a vast literature on defining and evaluating performance of inherently interpretable models in general, a non-exhaustive group of which is~\cite{Sanjeeb2018,EBM,ExNN,vaughan2018explainable,kraus2019forecasting,de2021spline,pmlr-v119-vidal20a,NEURIPS2019_567b8f5f}.
The evaluation of interpretability itself is still an open question, with detailed discussion of this issue in the references~\cite{halliwell2022need,chen2022does,narayanan2018humans,Lipton2018}.
There are several different approaches, with the current preferred (and most expensive) approach being User Studies, as this allows one to directly measure resulting outcomes from explanations~\cite{InterpretabilityRigorousScience}.
Indeed, measuring explanation quality via measured outcomes when humans are asked to simulate the action of an algorithm predates most of the AI interpretability literature, for instance works such as~\cite{Kulesza2013}. 
\cite{rong2022towards} provide a recent survey paper on User Studies for evaluating explanations and interpretability of AI models.

The closest works in the literature to this paper are~\cite{COGAM,poursabzi2021manipulating}.
In~\cite{COGAM}, the authors measure interpretability of sparse linear models and GAMs via user studies that measure cognitive load on participants carrying out tasks with these models and their associated explanations.
In the work by~\cite{poursabzi2021manipulating}, users are presented with the coefficients of 2 and 8-variable linear models. 
The quality of the explanations from each model is measured by how adept users are at simulating the action of the model.
Crucially, both these works evaluate regression models and are not concerned with issues arising due to the non-linearity of the sigmoid transformation required for logistic classification models.

The present work highlights that experts are not immune from misinterpreting certain model explanations, as also observed in the works:~\cite{InterpretingInterpretability,XAIDeployment}.

Low-degree polynomial approximations to the sigmoid are employed in Private Machine Learning, e.g.~\cite{Kim2018b,Chen2018,Kim2019}, but to our knowledge the piecewise linear approximation in this work is unique.

\paragraph{Structure.} 
In Section~\ref{sec:lin_prob_modelling} we provide the motivating example and present the LAM definition and optimality results.
Section~\ref{sec:experiments} contains the performance evaluation of LAMs against their logistic counterparts and Section~\ref{sec:user_survey} details the User Study.
We include most detail in the main text for the User Study, providing derivations, proofs and experimental details in the Supplementary Material (SM). 
We conclude with limitations and future work in Section~\ref{sec:conclusion}.

\paragraph{Notation.}

This work considers binary classification, and we use $\{(\vb*{x}^{(j)}, y^{(j)})\}_{j=1}^M$ to denote the data, where $\vb*{x} \in \mathbb{R}^d$ is a vector of numerical features in a given dataset.
The binary labels are indicators of some (usually bad) event such as a loan default: $y_i \in \{0 , 1\}$.
We assume data points are drawn i.i.d. 
For a point $\vb*{x} \in \mathbb{R}^d$, we denote the $i$\textsuperscript{th} element of $\vb*{x}$ by $x_i$.
The $i$\textsuperscript{th} Euclidean basis vector is denoted by $\vb{e}_i$.
The sigmoid function is defined as $\sigma(z) := (1 + e^{-z})^{-1}$ for $z \in \mathbb{R}$.
We follow credit modelling terminology, where \emph{risk} $\hat{y}(\vb*{x}^{(j)})$ is a model's subjective probability in $[0, 1]$ for a data point $\vb*{x}^{(j)}$ to be of positive class, that is $y^{(j)} = 1$.
The set $[d] := \{1, \ldots, d\}$ for $d \in \mathbb{N}$.

\section{Logistic and Linear Probability Modelling}
\label{sec:lin_prob_modelling}

GAMs~\cite{hastie2009elements,molnar2022} are a widely-known and long standing class of models considered to be inherently interpretable.
In this work we restrict attention to GAMs without feature interactions.
We formalise the notion of additive models as understood in this paper in Definition~\ref{def:logistic_additive_model}.

\begin{definition}[Logistic Additive Model]
\label{def:logistic_additive_model}
Let $\vb*{x} \in \mathbb{R}^d$. We call $\hat{y} : \mathbb{R}^d \to [0, 1]$ a \emph{logistic additive model} if takes the form
$\hat{y}(\vb*{x}) = \sigma(f(\vb*{x})) := \sigma ( \beta_0 + \sum_{i=1}^{d} \beta_i f_i(x_i) )$,
where the \emph{bias} $\beta_0 \in \mathbb{R}$ and for all $i \in [d]$, $\beta_i \in \mathbb{R}$ and the $f_i : \mathbb{R} \to \mathbb{R}$ are univariate shape functions.
We may also refer to $f(\vb*{x})$ as a logistic additive model with $\sigma \circ f$ implicit when the context is clear.
\end{definition}
We refer to logistic additive models as defined in Definition~\ref{def:logistic_additive_model} simply as logistic models or additive models when it is clear from context.
The simplest and most common additive model in wide usage is LR, where $f_i(x_i) = x_i$ for all $i \in [d]$~\cite{hastie2009elements}. 
Another example would be an Explainable Boosting Machine of~\cite{EBM} for classification, where the $f_i$ are piecewise constant functions (when there are no feature interactions).

\subsection{Logistic Modelling and Interpretability}
Historically, linear probability modelling, i.e. linear regression on dichotomous variables, was used prior to the advent of efficient methods for fitting LR models; see~\cite{Aldrich1984,hastie2009elements}. 
It is generally accepted that the application of linear regression to binary classification problems is unwise due to the propensity of the model returning probability estimates outside the $[0, 1]$ interval and sensitivity to outliers~\cite{NgLogReg,hastie2009elements,molnar2022}.
In certain circumstances, these issues are not observed, with linear regression obtaining similar classification performance to LR~\cite{Hellevik2009-cw,vonHippel}.
LR models have superceded linear probability models.

The LR model coefficients are typically interpreted as follows~\cite{molnar2022,hastie2009elements}: a unit change in variable $x_i$ leads to a multiplicative increase in odds for the positive class of $\exp(\beta_i)$.
However, as observed by~\cite{harris,vonHippel} this interpretation can be unwieldy for experts and nigh-on impossible for non-experts to reason with when we are concerned with probabilities, which is how the model outputs are typically presented and thought about.

\paragraph{Motivating Example.}\label{sec:motivating_example} Suppose there is a LR model $\hat{y}$ used to predict some negative outcome and coefficients are shared with downstream users of the model.
Inputs with risk $\geq 0.5$ are considered ``high-risk'' and ``low-risk'' otherwise.
Referring to Figure~\ref{fig:motivating_example}, Alice has a predicted risk of $\hat{y}(\vb*{x}^{(A)}) = 0.1$ and Bob has a predicted risk of $\hat{y}(\vb*{x}^{(B)}) = 0.25$.
Both Alice and Bob are interested in what happens to their risk if they increase the value of feature $x_i$ by one unit, all else equal.
They are told the model coefficient $\beta_i = 1.61 \approx \ln 5$, a common form of transparent model explanation.
First, both exponentiate $\beta_i$ which gives 5. 
They now know that increasing feature $x_i$ by increases their odds by a factor 5.
The model outputs a risk score in units of probability, so they now have to compute what increasing their odds corresponds to in probabilities.

\begin{figure}
  \begin{center}
        \includegraphics[width=0.8\columnwidth]{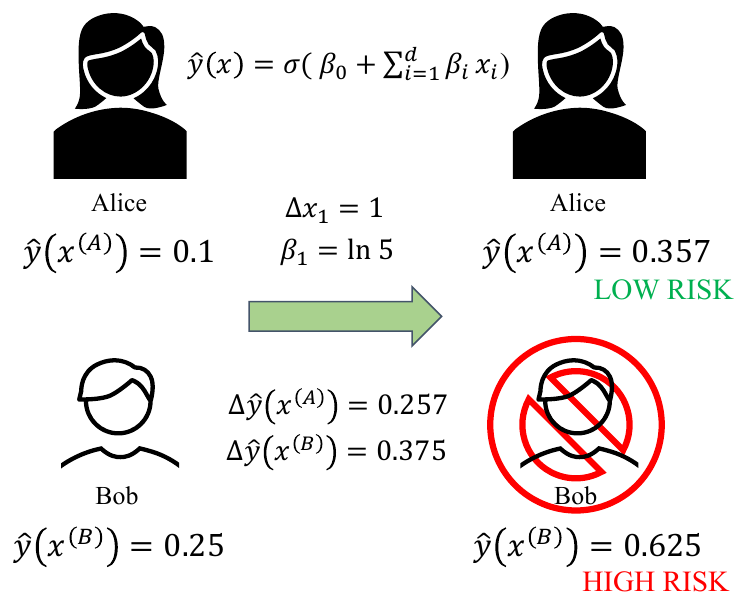}
  \end{center}
  \caption{The cost of misinterpration of model coefficients as explanations. Alice and Bob receive the same explanation, but incur a different change in model output, ultimately leading to different outcomes.}
  \label{fig:motivating_example}
\end{figure}

In this instance for Alice, $\operatorname{odds}(A) = \hat{y}(\vb*{x}^{(A)}) / (1 - \hat{y}(\vb*{x}^{(A)})) = 0.1 / (1 - 0.1) \approx 0.111$.
Alice then multiplies her odds by 5, yielding $0.556$.
Converting back to probabilities we have $\hat{y}(\vb*{x}^{(A)} + \vb{e}_i) \approx 0.556 / (1 + 0.556) \approx 0.357$.
Subtracting her original risk score, we have that increasing $x_i$ by one unit increases the risk by a probability of $0.257$ and Alice remains low-risk.
A similarly laborious computation gives an increase in risk for Bob to approximately $0.625$. 
Bob would be considered high-risk under a unit increase in $x_i$.
This was not obvious on first inspection before carrying out the computation explicitly. 

The nonlinearity of odds as a function of probabilities (and vice versa) means that  \emph{users with different risk scores cannot attribute logistic regression model outputs to the model coefficients in the same way.}
Moreover, the necessary computations are such that one cannot easily reason about the model's input-output relationship without a significant amount of practice.
On the contrary, the coefficients $\beta_i$ of a linear probability model admit the more direct interpretation of the increase in output model probability arising from a unit increase in $x_i$, regardless of the risk value of the user in question.

\subsection{Linearised Additive Models (LAMs)}
Given the preceding example, we wish to keep the interpretability characteristics of linear probability modelling while simultaneously sidestepping the issues arising from using a non-linear link function as in LR.
To this end, we present the Linearised Additive Model (LAM), which is defined with respect to an already trained logistic model, $\sigma \circ f$.
Informally, a LAM replaces the sigmoid link function with a clipping function and scales $f$ by an affine transformation.
Denote by $\projunit$ the projector from $\mathbb{R}$ onto the unit interval, that is $\projunit(z) = \max(0, \min(1, z))$.

\begin{definition}[Linearised Additive Models (LAM)]
\label{def:linearised_additive_models}
Let $\vb*{x} \in \mathbb{R}^{d}$ and let $\hat{y}(\vb*{x}) = \sigma(f(\vb*{x}))$ be an additive model per Definition~\ref{def:logistic_additive_model}.
Moreover, set $\alpha^\star := \frac{80000}{30773} \approx 2.5996$ as a universal constant.
Then, the \emph{Linearised Additive Model}, $\hat{y}_{\text{LAM}}$, relative to $f$ is given by
$$\hat{y}_{\text{LAM}}(\vb*{x})  = \projunit\qty(\frac{1}{2} + \frac{\beta_0}{2\alpha^\star} + \sum_{i=1}^{d} \frac{\beta_i}{2 \alpha^\star} f_i(x_i)).$$
For brevity we refer to $\hat{y}_{\text{LAM}}$ as a \emph{linearised} model and say that $\hat{y}_{\text{LAM}}$ is the \emph{LAM induced by $\hat{y}$}.
\end{definition}

\begin{figure}
  \begin{center}
        \resizebox{\columnwidth}{!}{%
        \input{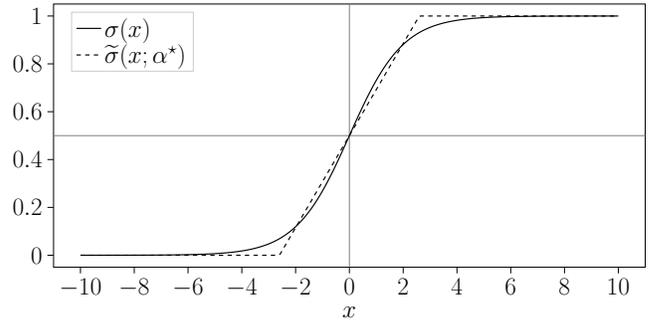}
        }
  \end{center}
  \caption{Optimal approximation $\widetilde{\sigma}(x; \alpha \approx 2.5996)$ to sigmoid function $\sigma(x)$. The parameter $\alpha$ corresponds the half the width along the $x$-axis of the middle line segment in the piecewise linear function.}
  \label{fig:piecewise_approx_sig}
\end{figure}

The LAM as defined in Definition~\ref{def:linearised_additive_models} is derived by considering the optimal 3-piece piecewise linear approximation (see Figure~\ref{fig:piecewise_approx_sig}) to the sigmoid function in terms of squared error, using this function as a link function for $f(\vb*{x})$ and invoking linearity. 
We choose this family of approximating functions as 3 pieces gives the simplest non-trivial approximation to the logistic sigmoid, while simulataneously allowing for a similar interpretation to a linear probability model. 
Squared error is chosen as it is a common metric for function approximation~\cite{hastie2009elements} for which we are able to derive a universal tractable approximator.

As an example, an LR model is written as $\hat{y}(\vb*{x}) = \sigma ( \sum_{i=0}^{d} \beta_i x_i)$, where we fix $x_0 = 1$ according to convention.
Then, the linearised version will be $\hat{y}_{\text{LAM}}(\vb*{x}) = \projunit(\frac{1}{2} + \sum_{i=0}^{d} \frac{\beta_i}{2 \alpha^\star} x_i)$.
In the case of linearised LR we can interpret the coefficients $\beta_i / 2\alpha^\star$ as the contribution to model output in probability space for a unit increase in $x_i$, as with a linear probability model.

\begin{remark}
To train a LAM, all that is required is to train the underlying logistic additive model, then apply Definition~\ref{def:linearised_additive_models} using the trained coefficients $\{\beta_i\}_{i=0}^{d}$.
\end{remark}

\paragraph{Motivating Example Revisited.} 
Under an LR model, Alice and Bob had to interpret a unit increase in feature $x_i$ as having different and unintuitive effects on their respective risk scores.
Using the induced LAM, a unit increase in $x_i$ gives rise to \emph{the same change in model output, regardless of the input}, i.e. $\frac{\beta_i}{2\alpha^\star} = \frac{1.61}{2 \times 2.5996}\approx 0.310$, modulo outputs greater than unity which will be clipped.
Alice and Bob's respective inputs of 0.1 and 0.25 can be readily seen to change to 0.41 and 0.56.
Note this gives the same qualitative result as the non-linearised model, namely Alice stays low-risk and Bob becomes high-risk. 
We surmise that this direct interpretation of the LAM coefficients incurs smaller cognitive overhead in answering questions of this type, in constrast to LR.

The following optimality result for the LAM approximation to LR lends theoretical support to our definition of LAMs.
\begin{theorem}[LAM Optimality]
\label{thm:lam_opt_main}
   Let $\piecelin$ be the space of 3-piece piecewise linear functions of one variable and $\mathcal{X} = \mathbb{R}^d$.
    For any LR model $\hat{y}(\vb*{x}) = \sigma(f(\vb*{x})) = \sigma(\beta_0 + \sum_{i = 1}^d \beta_i x_i)$ on $\mathcal{X}$, an approximator  $\widetilde{\sigma}(f(\vb*{x}))$ is defined for all $\widetilde{\sigma} \in \piecelin$. 
    Then, $\hat{y}_{\mathrm{LAM}}$ is the squared-error optimal approximator for arbitrary $f$, that is,
    \begin{align*}
        \hat{y}_{\mathrm{LAM}}(\vb*{x}) &= \widetilde{\sigma}(f(\vb*{x}); \alpha^\star) \qq{where}\\
    \widetilde{\sigma}(\,\cdot\,; \alpha^\star) &= \arg\min_{\widetilde{\sigma} \in \piecelin} \qty{ \int_{\mathcal{X}} \qty(\widetilde{\sigma}(f(\vb*{x})) - \sigma(f(\vb*{x})))^2 \dd \vb*{x} }, 
    \end{align*}
    with $\widetilde{\sigma}(z; \alpha^\star) := \projunit(\frac{1}{2}(1 + \frac{z}{\alpha^\star}))$, $\alpha^\star \approx 2.5996$.
\end{theorem}
\begin{proof}[Proof (Sketch)]
One can show via symmetry arguments that the minimising approximator comes from a one-parameter function family, when $d = 1$ with $f(x) = x$.
The error is a convex function of this parameter $\alpha$, which is minimised at $\alpha^\star$. 
The argument can then be extended to $d$-dimensional, affine $f(\vb*{x})$ by considering the error integral explicitly, from which the result follows.
\end{proof}
The proof of Theorem~\ref{thm:lam_opt_main} and supporting results is given in full Appendix~\ref{sec:lam_derivation}.

\section{Performance Comparison}
\label{sec:experiments}

In this section we detail our experiments comparing the model performance of logistic additive models against their linearised (LAM) counterparts.

\subsection{Experimental Setup}

\paragraph{Models.}
We compare all models to XGBoost~\cite{xgboost}, with shorthand XGB, and XGB with monotone constraints imposed (MonoXGB).
XGB classification performance serves as an effective upper-bound on the competing models.
Indeed, there has been much discussion in recent years about a tradeoff between model accuracy and interpretability~\cite{Rudin19,Dziugaite2020} and XGB is included here to illustrate this tradeoff.
Nonetheless, when using black-box models with post-hoc explanations care must be taken~\cite{zhou2022feature,SlackEtAl20,Shaikhina2021}, especially in sensitive domains.
As state-of-the-art baseline logistic additive models we consider the Additive Risk Models (ARMs) of~\cite{CHEN2022113647}, since they are GAMs that explicitly incorporate monotone constraints that are often required in sensitive domains such as finance~\cite{SR11-7}.
These models come in 1-layer (ARM1) and 2-layer (ARM2) variants.
Further baselines include NNLR (Non-negative LR~\cite{CHEN2022113647}) with raw feature inputs as an effective lower bound on model performance.
For an additive model with shorthand $M$, we denote its linearised version (in the sense of Definition~\ref{def:linearised_additive_models}) by LAM-$M$.
We include the linearised models LAM-NNLR, LAM-ARM1 and LAM-ARM2 in our experiments.
LAM-ARM2 has both the individual subscale models and global NNLR model linearised.
Table~\ref{tab:classifier_desc} provides a quick lookup table of the $k=8$ models.
Model hyperparameters are provided in Appendix~\ref{sec:hyperparams} and no attempts were made to tune them due to computational constraints.

\begin{table}
\centering
        {\scriptsize
        \begin{tabular}{lc}
\toprule
\textbf{Classifier} & \textbf{Description} \\
\midrule
NNLR & Non-negative logistic regression on unprocessed features. \\ 
LAM-NNLR & Linearised Additive Model (LAM) induced by NNLR. \\ 
ARM1 & One-Layer Additive Risk Model from~\citeSM{CHEN2022113647}. \\ 
LAM-ARM1 & LAM induced by ARM1. \\
XGB & XGBoost~\citeSM{xgboost} model. \\
MonoXGB & XGBoost~\citeSM{xgboost} model with monotone constraints imposed. \\
\midrule
ARM2 & Two-Layer Additive Risk Model from~\citeSM{CHEN2022113647}. \\
LAM-ARM2 & LAM induced by ARM2. Both logistic layers are linearised. \\
\bottomrule
\end{tabular}
        }
        \caption{Shorthand and descriptions of $k = 8$ classifiers under comparison. 
        Models in the second section have two layers, with first layer operating on subscales and the second layer combining the individual subscale scores.
        }
        \label{tab:classifier_desc}
\end{table}

\paragraph{Datasets.}
In this work we are principally interested in the consumer credit domain.
Bankruptcy prediction datasets are also included due to their similar problem structure and origin. 
We consider publically available datasets from the UCI repository~[\citeauthor{UCI}], namely, the German Credit dataset~\cite{misc_statlog_(german_credit_data)_144}, Australia credit approvals~\cite{misc_statlog_(australian_credit_approval)_143}, Taiwanese bankruptcy~\cite{misc_taiwanese_bankruptcy_prediction_572} prediction, Japanese credit screening~\cite{misc_japanese_credit_screening_28} and the Polish companies bankruptcy~\cite{misc_polish_companies_bankruptcy_365} dataset.
We consider also the FICO Home Equity Line of Credit dataset~(HELOC)~\cite{heloc}, Give Me Some Credit~(GMSC) and Lending Club~(LC)~\cite{lending-club} datasets.

\subsection{Performance Metrics}

We are chiefly interested to what extent linearising relative to logistic models introduces degredation (if any) of both classification performance and calibration -- the latter being of interest as we are modifying the probability estimates of a trained logistic model.
For each metric the 10-fold stratified cross-validation score is computed for every (classifier, dataset) combination. 

\textbf{Classification Performance.}
To measure of classification performance, we use the area under the curve of the receiver operating characteristic~\cite{BRADLEY19971145,Hanley1983}, denoted as AUC.

\textbf{Calibration.}
We consider two widely-used numerical summary statistics for the calibration, \emph{Expected Calibration Error (ECE)} and \emph{Maximum Calibration Error, (MCE)} with full details provided in Appendix~B.4.
Lower values of ECE and MCE correspond to better calibration of a particular model, with the idealised model having a value of zero for both.

\paragraph{Statistical Methodology.}

The statistical testing for the performance evaluation is described in full in Appendix~\ref{sec:stats}. 
For the purposes of discussion here, consider a graph where for each classification algorithm $\mathcal{A}$ we draw a node. 
We draw an edge between any nodes corresponding to algorithm pairs $(\mathcal{A}, \mathcal{A}')$ such that the performance of $\mathcal{A}$ cannot be distinguished from the performance $\mathcal{A}'$ with significance $\alpha=0.05$ according to a a Wilcoxon signed-rank test~\cite{Wilcoxon1945} conducted over the considered datasets.
Cliques in this graph correspond to algorithms that are mutually indistinguishable.
We display this graph, where the nodes $\mathcal{A}$ are arranged according to their average rank $R_{\mathcal{A}}$ in Figure~\ref{fig:three graphs}.
These are the Critical Difference (CD) diagrams of~\cite{Demsar2006} for AUC, ECE and MCE .
Not only are we interested in whether an observed difference in cross validated score between two algorithms is statistically significant, but also the size of this difference.
In Table~\ref{tab:comparison_auc} for the AUC metric,~\ref{tab:comparison_ece} (Appendix)~and~\ref{tab:comparison_mce}
~(Appendix) we tabulate the differences $\hat{\theta}_{\text{HL}}$ between all pairings $(\mathcal{A}, \mathcal{A}')$ for the AUC, ECE and MCE metrics respectively.
The quantity $\hat{\theta}_{\text{HL}}$ is a robust point estimate\footnote{The Hodges-Lehmann estimator associated to Wilcoxon's signed rank test~\cite{WILCOX202245}.} computed across the datasets.

\begin{figure}
     \centering
     \begin{subfigure}[b]{0.49\textwidth}
        \centering
        \resizebox{\columnwidth}{!}{%
\begin{tikzpicture}[font=\huge]

\begin{axis}[
clip=false,
height=6cm,
hide x axis,
hide y axis,
tick align=outside,
tick pos=left,
title style={yshift=-0.4cm},
title={AUC},
width=16cm,
x grid style={white!69.0196078431373!black},
xmin=0, xmax=1,
xtick style={color=black},
y dir=reverse,
y grid style={white!69.0196078431373!black},
ymin=0, ymax=1,
ytick style={color=black}
]
\addplot [draw=black, fill=black, mark=*, only marks]
table{%
x  y
0.209821428571429 0.333333333333333
0.214464285714286 0.333333333333333
};
\addplot [draw=black, fill=black, mark=*, only marks]
table{%
x  y
0.683392857142857 0.352941176470588
0.569642857142857 0.352941176470588
};
\addplot [draw=black, fill=black, mark=*, only marks]
table{%
x  y
0.683392857142857 0.392156862745098
0.718214285714286 0.392156862745098
};
\addplot [semithick, white]
table {%
0 0
1 1
};
\addplot [thick, black]
table {%
0.11 0.254901960784314
0.89 0.254901960784314
};
\addplot [thick, black]
table {%
0.89 0.196078431372549
0.89 0.254901960784314
};
\addplot [thick, black]
table {%
0.834285714285714 0.225490196078431
0.834285714285714 0.254901960784314
};
\addplot [thick, black]
table {%
0.778571428571429 0.196078431372549
0.778571428571429 0.254901960784314
};
\addplot [thick, black]
table {%
0.722857142857143 0.225490196078431
0.722857142857143 0.254901960784314
};
\addplot [thick, black]
table {%
0.667142857142857 0.196078431372549
0.667142857142857 0.254901960784314
};
\addplot [thick, black]
table {%
0.611428571428571 0.225490196078431
0.611428571428571 0.254901960784314
};
\addplot [thick, black]
table {%
0.555714285714286 0.196078431372549
0.555714285714286 0.254901960784314
};
\addplot [thick, black]
table {%
0.5 0.225490196078431
0.5 0.254901960784314
};
\addplot [thick, black]
table {%
0.444285714285714 0.196078431372549
0.444285714285714 0.254901960784314
};
\addplot [thick, black]
table {%
0.388571428571429 0.225490196078431
0.388571428571429 0.254901960784314
};
\addplot [thick, black]
table {%
0.332857142857143 0.196078431372549
0.332857142857143 0.254901960784314
};
\addplot [thick, black]
table {%
0.277142857142857 0.225490196078431
0.277142857142857 0.254901960784314
};
\addplot [thick, black]
table {%
0.221428571428571 0.196078431372549
0.221428571428571 0.254901960784314
};
\addplot [thick, black]
table {%
0.165714285714286 0.225490196078431
0.165714285714286 0.254901960784314
};
\addplot [thick, black]
table {%
0.11 0.196078431372549
0.11 0.254901960784314
};
\addplot [thick, black]
table {%
0.209821428571429 0.254901960784314
0.209821428571429 0.647058823529412
0.1 0.647058823529412
};
\addplot [thick, black]
table {%
0.214464285714286 0.254901960784314
0.214464285714286 0.741176470588235
0.1 0.741176470588235
};
\addplot [thick, black]
table {%
0.374642857142857 0.254901960784314
0.374642857142857 0.835294117647059
0.1 0.835294117647059
};
\addplot [thick, black]
table {%
0.411785714285714 0.254901960784314
0.411785714285714 0.929411764705882
0.1 0.929411764705882
};
\addplot [thick, black]
table {%
0.569642857142857 0.254901960784314
0.569642857142857 0.929411764705882
0.9 0.929411764705882
};
\addplot [thick, black]
table {%
0.683392857142857 0.254901960784314
0.683392857142857 0.835294117647059
0.9 0.835294117647059
};
\addplot [thick, black]
table {%
0.718214285714286 0.254901960784314
0.718214285714286 0.741176470588235
0.9 0.741176470588235
};
\addplot [thick, black]
table {%
0.818035714285714 0.254901960784314
0.818035714285714 0.647058823529412
0.9 0.647058823529412
};
\addplot [very thick, black]
table {%
0.211821428571429 0.333333333333333
0.212464285714286 0.333333333333333
};
\addplot [very thick, black]
table {%
0.571642857142857 0.352941176470588
0.681392857142857 0.352941176470588
};
\addplot [very thick, black]
table {%
0.685392857142857 0.392156862745098
0.716214285714286 0.392156862745098
};
\draw (axis cs:0.89,0.176470588235294) node[
  scale=0.9,
  anchor=south,
  text=black,
  rotate=0.0
]{1};
\draw (axis cs:0.778571428571429,0.176470588235294) node[
  scale=0.9,
  anchor=south,
  text=black,
  rotate=0.0
]{2};
\draw (axis cs:0.667142857142857,0.176470588235294) node[
  scale=0.9,
  anchor=south,
  text=black,
  rotate=0.0
]{3};
\draw (axis cs:0.555714285714286,0.176470588235294) node[
  scale=0.9,
  anchor=south,
  text=black,
  rotate=0.0
]{4};
\draw (axis cs:0.444285714285714,0.176470588235294) node[
  scale=0.9,
  anchor=south,
  text=black,
  rotate=0.0
]{5};
\draw (axis cs:0.332857142857143,0.176470588235294) node[
  scale=0.9,
  anchor=south,
  text=black,
  rotate=0.0
]{6};
\draw (axis cs:0.221428571428571,0.176470588235294) node[
  scale=0.9,
  anchor=south,
  text=black,
  rotate=0.0
]{7};
\draw (axis cs:0.11,0.176470588235294) node[
  scale=0.9,
  anchor=south,
  text=black,
  rotate=0.0
]{8};
\draw (axis cs:0.14,0.617647058823529) node[
  scale=0.7,
  anchor=east,
  text=black,
  rotate=0.0
]{7.1};
\draw (axis cs:0.09,0.647058823529412) node[
  scale=0.8,
  anchor=east,
  text=black,
  rotate=0.0
]{NNLR};
\draw (axis cs:0.14,0.711764705882353) node[
  scale=0.7,
  anchor=east,
  text=black,
  rotate=0.0
]{7.1};
\draw (axis cs:0.09,0.741176470588235) node[
  scale=0.8,
  anchor=east,
  text=black,
  rotate=0.0
]{LAM-NNLR};
\draw (axis cs:0.14,0.805882352941176) node[
  scale=0.7,
  anchor=east,
  text=black,
  rotate=0.0
]{5.6};
\draw (axis cs:0.09,0.835294117647059) node[
  scale=0.8,
  anchor=east,
  text=black,
  rotate=0.0
]{LAM-ARM2};
\draw (axis cs:0.14,0.9) node[
  scale=0.7,
  anchor=east,
  text=black,
  rotate=0.0
]{5.3};
\draw (axis cs:0.09,0.929411764705882) node[
  scale=0.8,
  anchor=east,
  text=black,
  rotate=0.0
]{ARM2};
\draw (axis cs:0.86,0.9) node[
  scale=0.7,
  anchor=west,
  text=black,
  rotate=0.0
]{3.9};
\draw (axis cs:0.91,0.929411764705882) node[
  scale=0.8,
  anchor=west,
  text=black,
  rotate=0.0
]{LAM-ARM1};
\draw (axis cs:0.86,0.805882352941176) node[
  scale=0.7,
  anchor=west,
  text=black,
  rotate=0.0
]{2.9};
\draw (axis cs:0.91,0.835294117647059) node[
  scale=0.8,
  anchor=west,
  text=black,
  rotate=0.0
]{MonoXGB};
\draw (axis cs:0.86,0.711764705882353) node[
  scale=0.7,
  anchor=west,
  text=black,
  rotate=0.0
]{2.5};
\draw (axis cs:0.91,0.741176470588235) node[
  scale=0.8,
  anchor=west,
  text=black,
  rotate=0.0
]{ARM1};
\draw (axis cs:0.86,0.617647058823529) node[
  scale=0.7,
  anchor=west,
  text=black,
  rotate=0.0
]{1.6};
\draw (axis cs:0.91,0.647058823529412) node[
  scale=0.8,
  anchor=west,
  text=black,
  rotate=0.0
]{XGB};
\end{axis}

\end{tikzpicture}
        }
     \end{subfigure} 
     \begin{subfigure}[b]{0.49\textwidth}
        \centering
        \resizebox{\columnwidth}{!}{%
\begin{tikzpicture}[font=\huge]

\begin{axis}[
clip=false,
height=6cm,
hide x axis,
hide y axis,
tick align=outside,
tick pos=left,
title style={yshift=-0.4cm},
title={ECE},
width=16cm,
x grid style={white!69.0196078431373!black},
xmin=0, xmax=1,
xtick style={color=black},
y dir=reverse,
y grid style={white!69.0196078431373!black},
ymin=0, ymax=1,
ytick style={color=black}
]
\addplot [draw=black, fill=black, mark=*, only marks]
table{%
x  y
0.263214285714286 0.333333333333333
0.300357142857143 0.333333333333333
};
\addplot [draw=black, fill=black, mark=*, only marks]
table{%
x  y
0.632321428571429 0.352941176470588
0.64625 0.352941176470588
};
\addplot [draw=black, fill=black, mark=*, only marks]
table{%
x  y
0.632321428571429 0.392156862745098
0.757678571428571 0.392156862745098
0.794821428571429 0.392156862745098
};
\addplot [semithick, white]
table {%
0 0
1 1
};
\addplot [thick, black]
table {%
0.11 0.254901960784314
0.89 0.254901960784314
};
\addplot [thick, black]
table {%
0.89 0.196078431372549
0.89 0.254901960784314
};
\addplot [thick, black]
table {%
0.834285714285714 0.225490196078431
0.834285714285714 0.254901960784314
};
\addplot [thick, black]
table {%
0.778571428571429 0.196078431372549
0.778571428571429 0.254901960784314
};
\addplot [thick, black]
table {%
0.722857142857143 0.225490196078431
0.722857142857143 0.254901960784314
};
\addplot [thick, black]
table {%
0.667142857142857 0.196078431372549
0.667142857142857 0.254901960784314
};
\addplot [thick, black]
table {%
0.611428571428571 0.225490196078431
0.611428571428571 0.254901960784314
};
\addplot [thick, black]
table {%
0.555714285714286 0.196078431372549
0.555714285714286 0.254901960784314
};
\addplot [thick, black]
table {%
0.5 0.225490196078431
0.5 0.254901960784314
};
\addplot [thick, black]
table {%
0.444285714285714 0.196078431372549
0.444285714285714 0.254901960784314
};
\addplot [thick, black]
table {%
0.388571428571429 0.225490196078431
0.388571428571429 0.254901960784314
};
\addplot [thick, black]
table {%
0.332857142857143 0.196078431372549
0.332857142857143 0.254901960784314
};
\addplot [thick, black]
table {%
0.277142857142857 0.225490196078431
0.277142857142857 0.254901960784314
};
\addplot [thick, black]
table {%
0.221428571428571 0.196078431372549
0.221428571428571 0.254901960784314
};
\addplot [thick, black]
table {%
0.165714285714286 0.225490196078431
0.165714285714286 0.254901960784314
};
\addplot [thick, black]
table {%
0.11 0.196078431372549
0.11 0.254901960784314
};
\addplot [thick, black]
table {%
0.147142857142857 0.254901960784314
0.147142857142857 0.647058823529412
0.1 0.647058823529412
};
\addplot [thick, black]
table {%
0.263214285714286 0.254901960784314
0.263214285714286 0.741176470588235
0.1 0.741176470588235
};
\addplot [thick, black]
table {%
0.300357142857143 0.254901960784314
0.300357142857143 0.835294117647059
0.1 0.835294117647059
};
\addplot [thick, black]
table {%
0.458214285714286 0.254901960784314
0.458214285714286 0.929411764705882
0.1 0.929411764705882
};
\addplot [thick, black]
table {%
0.632321428571429 0.254901960784314
0.632321428571429 0.929411764705882
0.9 0.929411764705882
};
\addplot [thick, black]
table {%
0.64625 0.254901960784314
0.64625 0.835294117647059
0.9 0.835294117647059
};
\addplot [thick, black]
table {%
0.757678571428571 0.254901960784314
0.757678571428571 0.741176470588235
0.9 0.741176470588235
};
\addplot [thick, black]
table {%
0.794821428571429 0.254901960784314
0.794821428571429 0.647058823529412
0.9 0.647058823529412
};
\addplot [very thick, black]
table {%
0.265214285714286 0.333333333333333
0.298357142857143 0.333333333333333
};
\addplot [very thick, black]
table {%
0.634321428571429 0.352941176470588
0.64425 0.352941176470588
};
\addplot [very thick, black]
table {%
0.634321428571429 0.392156862745098
0.792821428571429 0.392156862745098
};
\draw (axis cs:0.89,0.176470588235294) node[
  scale=0.9,
  anchor=south,
  text=black,
  rotate=0.0
]{1};
\draw (axis cs:0.778571428571429,0.176470588235294) node[
  scale=0.9,
  anchor=south,
  text=black,
  rotate=0.0
]{2};
\draw (axis cs:0.667142857142857,0.176470588235294) node[
  scale=0.9,
  anchor=south,
  text=black,
  rotate=0.0
]{3};
\draw (axis cs:0.555714285714286,0.176470588235294) node[
  scale=0.9,
  anchor=south,
  text=black,
  rotate=0.0
]{4};
\draw (axis cs:0.444285714285714,0.176470588235294) node[
  scale=0.9,
  anchor=south,
  text=black,
  rotate=0.0
]{5};
\draw (axis cs:0.332857142857143,0.176470588235294) node[
  scale=0.9,
  anchor=south,
  text=black,
  rotate=0.0
]{6};
\draw (axis cs:0.221428571428571,0.176470588235294) node[
  scale=0.9,
  anchor=south,
  text=black,
  rotate=0.0
]{7};
\draw (axis cs:0.11,0.176470588235294) node[
  scale=0.9,
  anchor=south,
  text=black,
  rotate=0.0
]{8};
\draw (axis cs:0.14,0.617647058823529) node[
  scale=0.7,
  anchor=east,
  text=black,
  rotate=0.0
]{7.7};
\draw (axis cs:0.09,0.647058823529412) node[
  scale=0.8,
  anchor=east,
  text=black,
  rotate=0.0
]{LAM-ARM1};
\draw (axis cs:0.14,0.711764705882353) node[
  scale=0.7,
  anchor=east,
  text=black,
  rotate=0.0
]{6.6};
\draw (axis cs:0.09,0.741176470588235) node[
  scale=0.8,
  anchor=east,
  text=black,
  rotate=0.0
]{LAM-ARM2};
\draw (axis cs:0.14,0.805882352941176) node[
  scale=0.7,
  anchor=east,
  text=black,
  rotate=0.0
]{6.3};
\draw (axis cs:0.09,0.835294117647059) node[
  scale=0.8,
  anchor=east,
  text=black,
  rotate=0.0
]{ARM1};
\draw (axis cs:0.14,0.9) node[
  scale=0.7,
  anchor=east,
  text=black,
  rotate=0.0
]{4.9};
\draw (axis cs:0.09,0.929411764705882) node[
  scale=0.8,
  anchor=east,
  text=black,
  rotate=0.0
]{ARM2};
\draw (axis cs:0.86,0.9) node[
  scale=0.7,
  anchor=west,
  text=black,
  rotate=0.0
]{3.3};
\draw (axis cs:0.91,0.929411764705882) node[
  scale=0.8,
  anchor=west,
  text=black,
  rotate=0.0
]{XGB};
\draw (axis cs:0.86,0.805882352941176) node[
  scale=0.7,
  anchor=west,
  text=black,
  rotate=0.0
]{3.2};
\draw (axis cs:0.91,0.835294117647059) node[
  scale=0.8,
  anchor=west,
  text=black,
  rotate=0.0
]{MonoXGB};
\draw (axis cs:0.86,0.711764705882353) node[
  scale=0.7,
  anchor=west,
  text=black,
  rotate=0.0
]{2.2};
\draw (axis cs:0.91,0.741176470588235) node[
  scale=0.8,
  anchor=west,
  text=black,
  rotate=0.0
]{LAM-NNLR};
\draw (axis cs:0.86,0.617647058823529) node[
  scale=0.7,
  anchor=west,
  text=black,
  rotate=0.0
]{1.9};
\draw (axis cs:0.91,0.647058823529412) node[
  scale=0.8,
  anchor=west,
  text=black,
  rotate=0.0
]{NNLR};
\end{axis}

\end{tikzpicture}
        }
     \end{subfigure} \\
     \begin{subfigure}[b]{0.49\textwidth}
        \centering
        \hspace{-0.02\columnwidth}
        \resizebox{\columnwidth}{!}{%
\begin{tikzpicture}[font=\huge]

\begin{axis}[
clip=false,
height=6cm,
hide x axis,
hide y axis,
tick align=outside,
tick pos=left,
title style={yshift=-0.4cm},
title={MCE},
width=16cm,
x grid style={white!69.0196078431373!black},
xmin=0, xmax=1,
xtick style={color=black},
y dir=reverse,
y grid style={white!69.0196078431373!black},
ymin=0, ymax=1,
ytick style={color=black}
]
\addplot [draw=black, fill=black, mark=*, only marks]
table{%
x  y
0.45125 0.333333333333333
0.553392857142857 0.333333333333333
0.281785714285714 0.333333333333333
0.305 0.333333333333333
};
\addplot [draw=black, fill=black, mark=*, only marks]
table{%
x  y
0.45125 0.372549019607843
0.553392857142857 0.372549019607843
0.574285714285714 0.372549019607843
0.305 0.372549019607843
};
\addplot [draw=black, fill=black, mark=*, only marks]
table{%
x  y
0.45125 0.411764705882353
0.553392857142857 0.411764705882353
0.574285714285714 0.411764705882353
0.472142857142857 0.411764705882353
};
\addplot [draw=black, fill=black, mark=*, only marks]
table{%
x  y
0.45125 0.450980392156863
0.553392857142857 0.450980392156863
0.574285714285714 0.450980392156863
0.67875 0.450980392156863
0.683392857142857 0.450980392156863
};
\addplot [semithick, white]
table {%
0 0
1 1
};
\addplot [thick, black]
table {%
0.11 0.254901960784314
0.89 0.254901960784314
};
\addplot [thick, black]
table {%
0.89 0.196078431372549
0.89 0.254901960784314
};
\addplot [thick, black]
table {%
0.834285714285714 0.225490196078431
0.834285714285714 0.254901960784314
};
\addplot [thick, black]
table {%
0.778571428571429 0.196078431372549
0.778571428571429 0.254901960784314
};
\addplot [thick, black]
table {%
0.722857142857143 0.225490196078431
0.722857142857143 0.254901960784314
};
\addplot [thick, black]
table {%
0.667142857142857 0.196078431372549
0.667142857142857 0.254901960784314
};
\addplot [thick, black]
table {%
0.611428571428571 0.225490196078431
0.611428571428571 0.254901960784314
};
\addplot [thick, black]
table {%
0.555714285714286 0.196078431372549
0.555714285714286 0.254901960784314
};
\addplot [thick, black]
table {%
0.5 0.225490196078431
0.5 0.254901960784314
};
\addplot [thick, black]
table {%
0.444285714285714 0.196078431372549
0.444285714285714 0.254901960784314
};
\addplot [thick, black]
table {%
0.388571428571429 0.225490196078431
0.388571428571429 0.254901960784314
};
\addplot [thick, black]
table {%
0.332857142857143 0.196078431372549
0.332857142857143 0.254901960784314
};
\addplot [thick, black]
table {%
0.277142857142857 0.225490196078431
0.277142857142857 0.254901960784314
};
\addplot [thick, black]
table {%
0.221428571428571 0.196078431372549
0.221428571428571 0.254901960784314
};
\addplot [thick, black]
table {%
0.165714285714286 0.225490196078431
0.165714285714286 0.254901960784314
};
\addplot [thick, black]
table {%
0.11 0.196078431372549
0.11 0.254901960784314
};
\addplot [thick, black]
table {%
0.281785714285714 0.254901960784314
0.281785714285714 0.647058823529412
0.1 0.647058823529412
};
\addplot [thick, black]
table {%
0.305 0.254901960784314
0.305 0.741176470588235
0.1 0.741176470588235
};
\addplot [thick, black]
table {%
0.45125 0.254901960784314
0.45125 0.835294117647059
0.1 0.835294117647059
};
\addplot [thick, black]
table {%
0.472142857142857 0.254901960784314
0.472142857142857 0.929411764705882
0.1 0.929411764705882
};
\addplot [thick, black]
table {%
0.553392857142857 0.254901960784314
0.553392857142857 0.929411764705882
0.9 0.929411764705882
};
\addplot [thick, black]
table {%
0.574285714285714 0.254901960784314
0.574285714285714 0.835294117647059
0.9 0.835294117647059
};
\addplot [thick, black]
table {%
0.67875 0.254901960784314
0.67875 0.741176470588235
0.9 0.741176470588235
};
\addplot [thick, black]
table {%
0.683392857142857 0.254901960784314
0.683392857142857 0.647058823529412
0.9 0.647058823529412
};
\addplot [very thick, black]
table {%
0.283785714285714 0.333333333333333
0.551392857142857 0.333333333333333
};
\addplot [very thick, black]
table {%
0.307 0.372549019607843
0.572285714285714 0.372549019607843
};
\addplot [very thick, black]
table {%
0.45325 0.411764705882353
0.572285714285714 0.411764705882353
};
\addplot [very thick, black]
table {%
0.45325 0.450980392156863
0.681392857142857 0.450980392156863
};
\draw (axis cs:0.89,0.176470588235294) node[
  scale=0.9,
  anchor=south,
  text=black,
  rotate=0.0
]{1};
\draw (axis cs:0.778571428571429,0.176470588235294) node[
  scale=0.9,
  anchor=south,
  text=black,
  rotate=0.0
]{2};
\draw (axis cs:0.667142857142857,0.176470588235294) node[
  scale=0.9,
  anchor=south,
  text=black,
  rotate=0.0
]{3};
\draw (axis cs:0.555714285714286,0.176470588235294) node[
  scale=0.9,
  anchor=south,
  text=black,
  rotate=0.0
]{4};
\draw (axis cs:0.444285714285714,0.176470588235294) node[
  scale=0.9,
  anchor=south,
  text=black,
  rotate=0.0
]{5};
\draw (axis cs:0.332857142857143,0.176470588235294) node[
  scale=0.9,
  anchor=south,
  text=black,
  rotate=0.0
]{6};
\draw (axis cs:0.221428571428571,0.176470588235294) node[
  scale=0.9,
  anchor=south,
  text=black,
  rotate=0.0
]{7};
\draw (axis cs:0.11,0.176470588235294) node[
  scale=0.9,
  anchor=south,
  text=black,
  rotate=0.0
]{8};
\draw (axis cs:0.14,0.617647058823529) node[
  scale=0.7,
  anchor=east,
  text=black,
  rotate=0.0
]{6.5};
\draw (axis cs:0.09,0.647058823529412) node[
  scale=0.8,
  anchor=east,
  text=black,
  rotate=0.0
]{LAM-ARM2};
\draw (axis cs:0.14,0.711764705882353) node[
  scale=0.7,
  anchor=east,
  text=black,
  rotate=0.0
]{6.2};
\draw (axis cs:0.09,0.741176470588235) node[
  scale=0.8,
  anchor=east,
  text=black,
  rotate=0.0
]{LAM-ARM1};
\draw (axis cs:0.14,0.805882352941176) node[
  scale=0.7,
  anchor=east,
  text=black,
  rotate=0.0
]{4.9};
\draw (axis cs:0.09,0.835294117647059) node[
  scale=0.8,
  anchor=east,
  text=black,
  rotate=0.0
]{LAM-NNLR};
\draw (axis cs:0.14,0.9) node[
  scale=0.7,
  anchor=east,
  text=black,
  rotate=0.0
]{4.8};
\draw (axis cs:0.09,0.929411764705882) node[
  scale=0.8,
  anchor=east,
  text=black,
  rotate=0.0
]{ARM1};
\draw (axis cs:0.86,0.9) node[
  scale=0.7,
  anchor=west,
  text=black,
  rotate=0.0
]{4.0};
\draw (axis cs:0.91,0.929411764705882) node[
  scale=0.8,
  anchor=west,
  text=black,
  rotate=0.0
]{NNLR};
\draw (axis cs:0.86,0.805882352941176) node[
  scale=0.7,
  anchor=west,
  text=black,
  rotate=0.0
]{3.8};
\draw (axis cs:0.91,0.835294117647059) node[
  scale=0.8,
  anchor=west,
  text=black,
  rotate=0.0
]{ARM2};
\draw (axis cs:0.86,0.711764705882353) node[
  scale=0.7,
  anchor=west,
  text=black,
  rotate=0.0
]{2.9};
\draw (axis cs:0.91,0.741176470588235) node[
  scale=0.8,
  anchor=west,
  text=black,
  rotate=0.0
]{XGB};
\draw (axis cs:0.86,0.617647058823529) node[
  scale=0.7,
  anchor=west,
  text=black,
  rotate=0.0
]{2.9};
\draw (axis cs:0.91,0.647058823529412) node[
  scale=0.8,
  anchor=west,
  text=black,
  rotate=0.0
]{MonoXGB};
\end{axis}

\end{tikzpicture}
        }
     \end{subfigure}
        \caption{Critical Difference diagrams for AUC, ECE and MCE. The $x$-axis represents the mean rank averaged over all datasets, with each classifier's mean rank reported adjacent to its name (lower rank $\equiv$ better). Classifiers connected by an edge \emph{cannot} be distinguished with significance $\alpha=0.05$.}
        \label{fig:three graphs}
\end{figure}
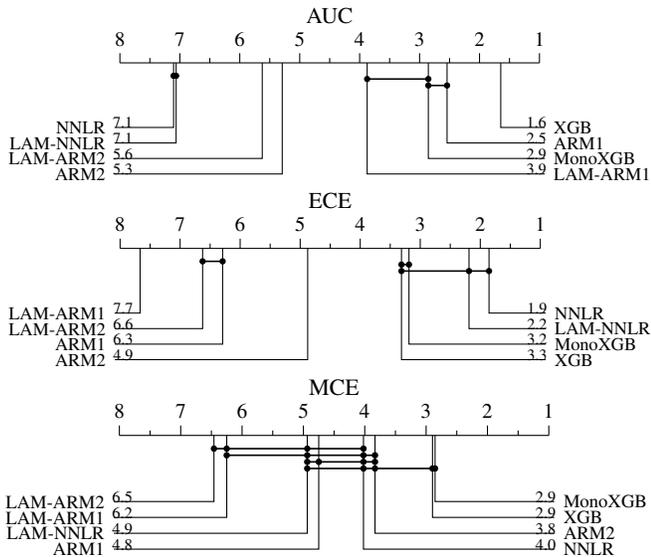

\paragraph{Results for Classification Performance.}
Raw AUC scores for each (classifier, dataset) combination are reported in Table~\ref{tab:auc_raw} in Appendix~\ref{sec:raw_metrics}.
We highlight several observations from the CD diagram for AUC (Figure~\ref{fig:three graphs}).
Unconstrained XGB is the strongest model in terms of AUC, as we may expect.
The weakest performing model families are \{NNLR, LAM-NNLR\}, presumably due to their simplicity as compared with the other models.
LAM-NNLR models are indistinguishable in AUC performance from their logistic counterparts, NNLR.
Interestingly, LAM-ARM1 is connected to MonoXGB in the CD graph.
MonoXGB models are considered state of the art for monotone constrained models on tabular data and LAM-ARM1 is more interpretable according to a number of criteria, yet here they display statistically indistinguishable classification performance.
However, LAM-ARM1 can be distiguished with statistical significance from its logistic counterpart, ARM1.
Nonetheless they are in the same connected component of the CD graph indicating a very similar level of proficiency.
The AUC performance of LAM-ARM2 can be separated from the performance of ARM2.
In this instance we hypothesise that this is happening due to linearisation being applied in two layers as opposed to just one, allowing errors to accumulate.
\emph{In terms of absolute numerical difference in AUC performance, linearisation incurs a very small penalty}, as shown in Table~\ref{tab:comparison_auc}.
The AUC penalties (point estimate) for linearising NNLR, ARM1 and ARM2 are 0.000, 0.003 and 0.003 respectively across the datasets.

\setlength{\tabcolsep}{2pt}

\begin{table}
\centering
        {\scriptsize
        \input{tables/2022-11-24_log-odds-only_AUC_pseudomedian_mean_score_diff_matrix.tex}
        }
        \caption{Point estimate for difference in AUC scores between classifiers. Negative values mean column model is better than row model. Bold values indicate statistical significance.}
        \label{tab:comparison_auc}
\end{table}

\paragraph{Results for Calibration.}\label{sec:results_calib}
Raw ECE and MCE scores for each (classifier, dataset) combination are reported in Tables~\ref{tab:ece_raw}~and~\ref{tab:mce_raw} respectively in the Appendix.
Inspecting the CD diagrams for the ECE and MCE calibration metrics in Figure~\ref{fig:three graphs}, we see there is a penalty incurred on model calibration for linearising.
Indeed for both metrics, only the linearisation of NNLR models gives a penalty in calibration that cannot be distinguished from the logistic model with statistical significance.
However, we note the numerical difference across the datasets is small, as evidenced in Tables~\ref{tab:comparison_ece}~and~\ref{tab:comparison_mce} in the Appendix, 
being of order $\sim$0.005 for ECE and $\sim$0.03 for MCE.
We believe this discrepancy in calibration is likely due to ``model certainty'' evinced by the linearised models, namely output values lying in $\{0, 1\}$.
Notably, the LC datasets get a very large fraction of predictions with certainty ($\sim$80\%) using LAM-ARM1 and LAM-ARM2 models, as shown in Table~\ref{tab:full_certainty} in the Appendix.
The MCE ranks are all fairly close to one another, meaning all of the classifiers are more closely matched on this metric as compared with AUC and ECE.

\section{User Survey}
\label{sec:user_survey}

\begin{figure*}
    \centering
    \includegraphics[height=4.0cm]{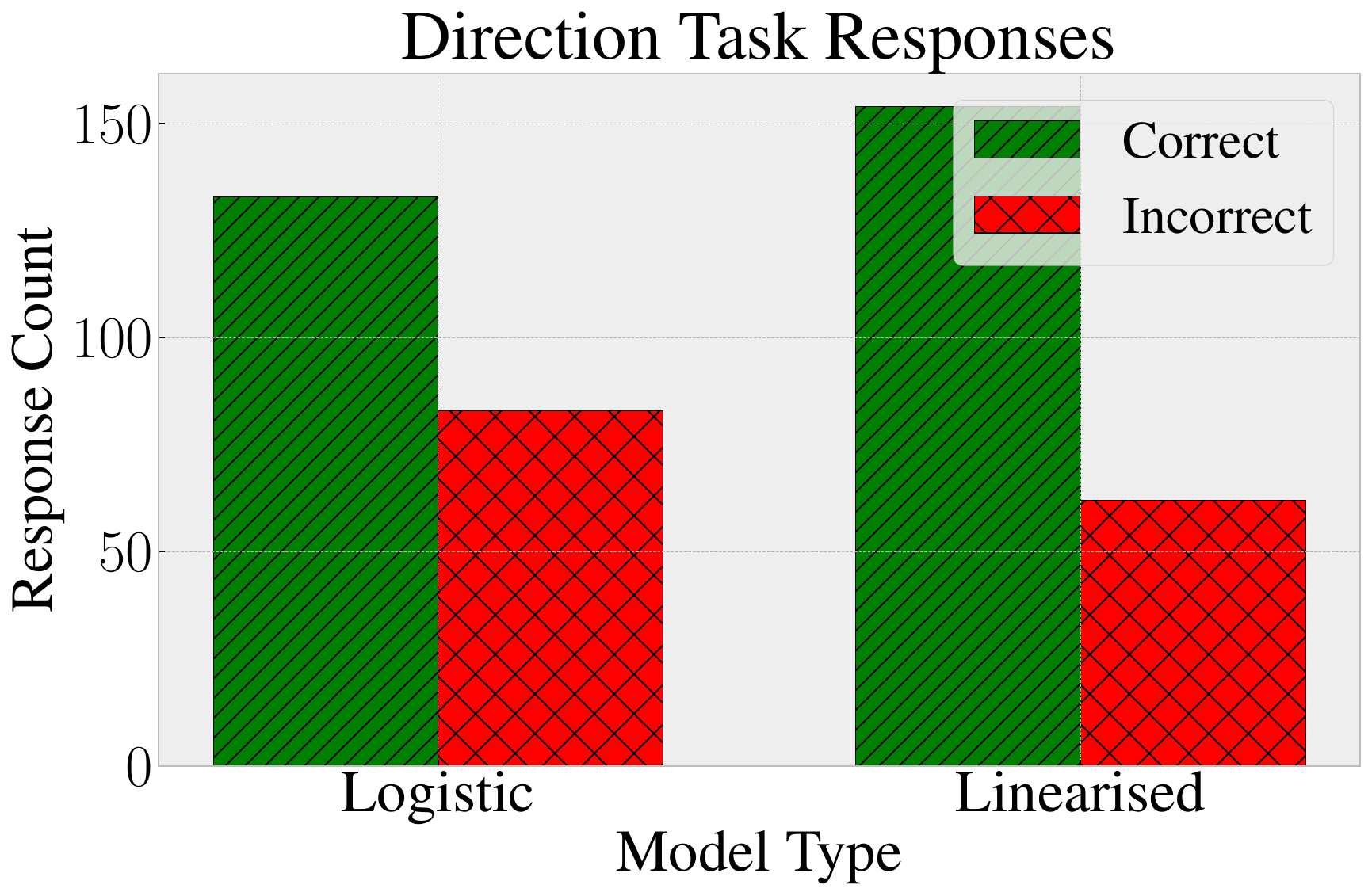}
    \includegraphics[height=4.0cm]{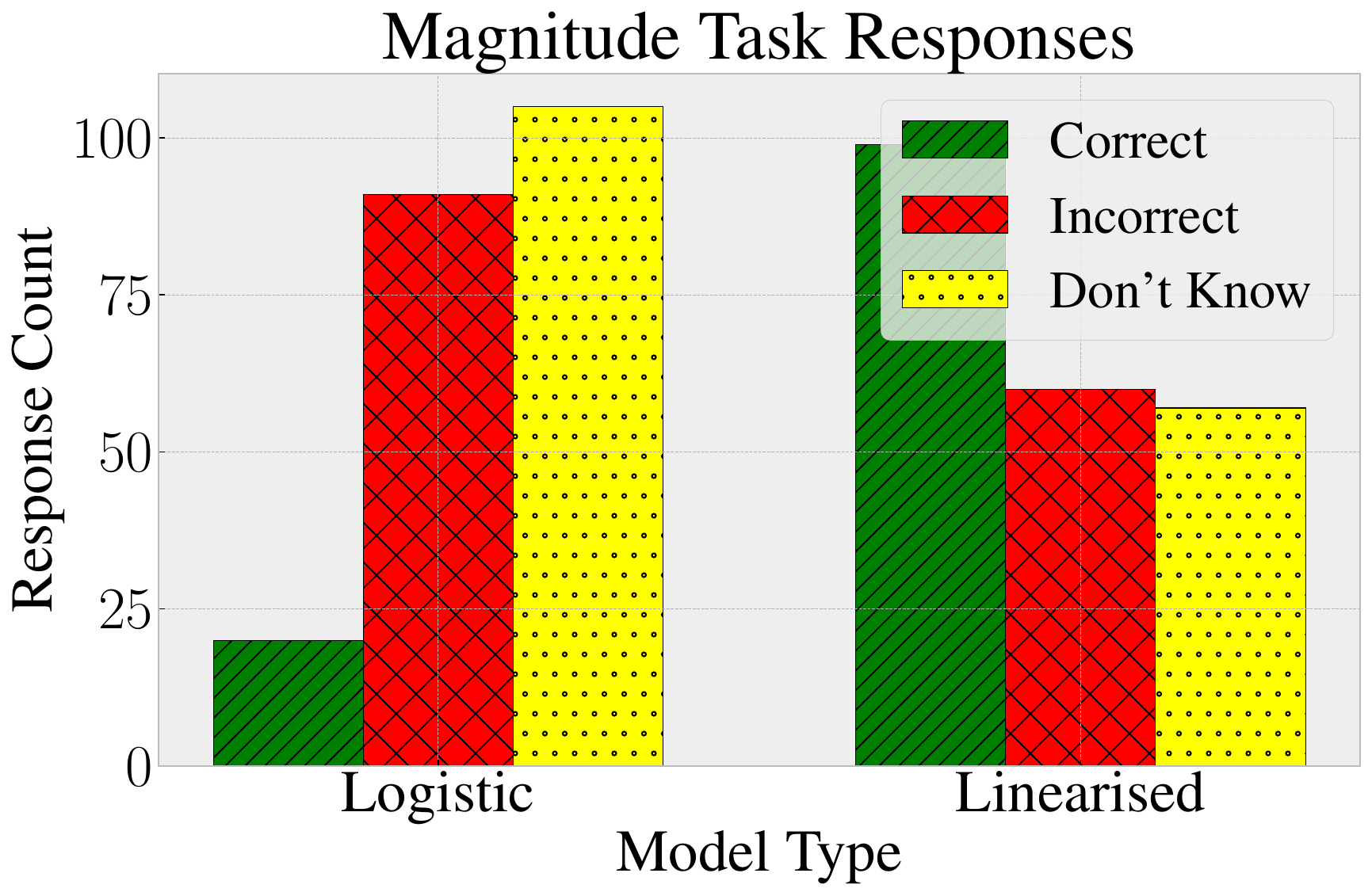}
    \includegraphics[height=4.0cm]{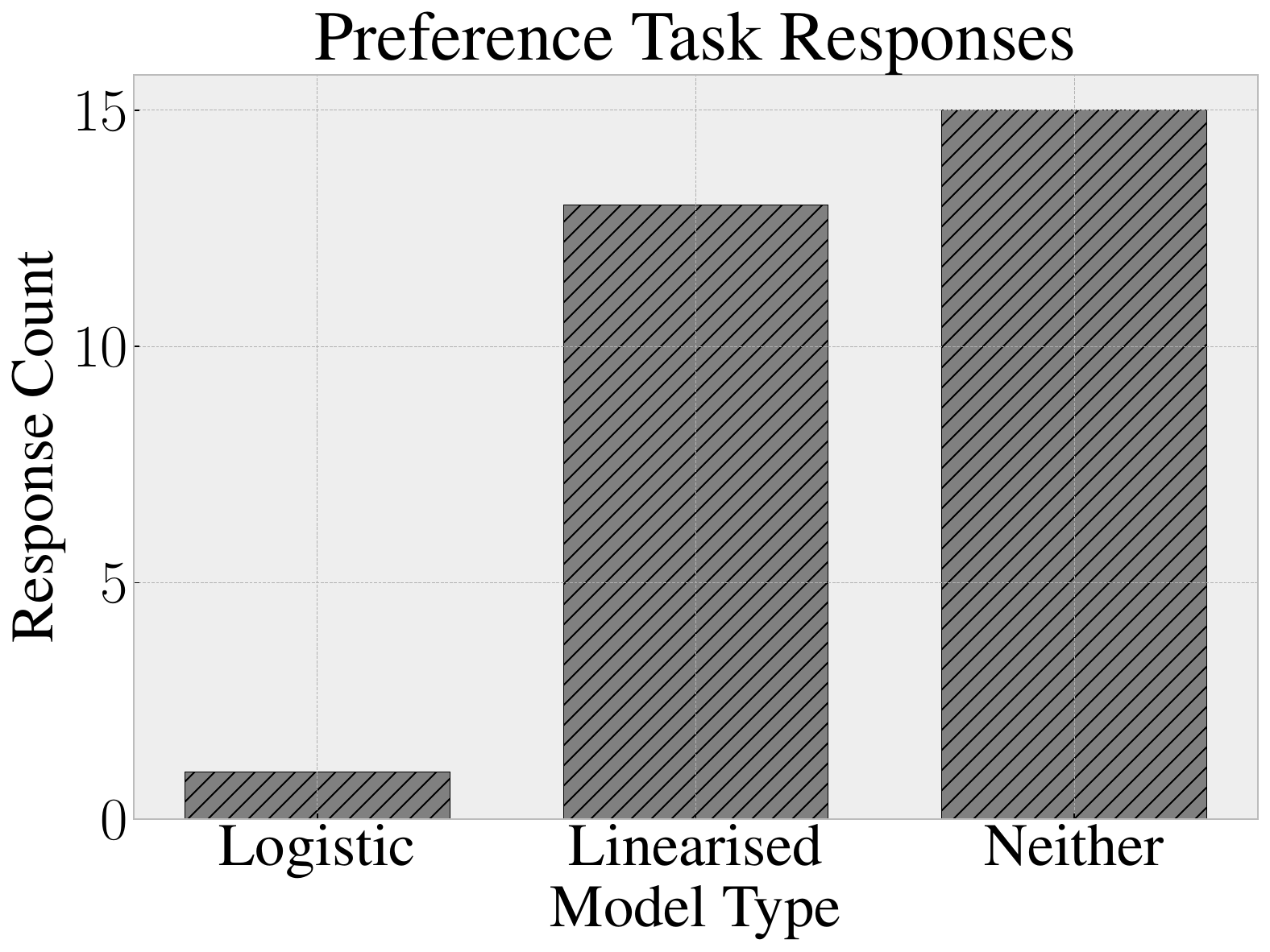}
    \caption{Summary of responses to User Survey comparing interpretability of LR models against LAM-LR. Users predict the change in direction of model output correctly at a slightly higher rate for linearised models (left). When users are asked about the magnitude of this change, users fare overwhelmingly better using LAM-LR as opposed to LR (center). When asked about which model they found easiest to use (right), the majority of users said neither model, with slightly fewer opting for LAM-LR and \emph{only one} for LR. }
    \label{fig:user_study}
\end{figure*}

The motivating example in Section~\ref{sec:motivating_example} suggests that linearised models will be easier to interpret and reason about by users as opposed to logistic models.
To substantiate this, we conduct a user study where the aim is to ascertain which class of models is more interpretable: logistic models or LAMs.
Our proxy for interpretability is how capable users are at carrying out basic reasoning tasks about the models' outputs, given the model coefficients.
This falls within the paradigm of Human-Grounded Evaluation~\cite{InterpretabilityRigorousScience}, where empirically measured human performance on a simplified task serves as a proxy for explanation quality.
Using human simulation of model outputs as a proxy for model interpretability is a similar strategy to that used in the work by~\cite{poursabzi2021manipulating}.

\subsection{Setup}
The study is structured as a questionnaire, wherein participants are shown a small LR model alongside its linearised counterpart and are asked to predict how the output of each model will change in both direction and magnitude from some initial $($input, output$)$ pairs.
Participants are shown several instantiations, which we call scenarios.
More formally, a \emph{scenario} consists of the following elements:
\setlist{nolistsep}
\begin{itemize}[noitemsep,leftmargin=.5cm]
    \item A tuple of model coefficients $(A_0, A_1, A_2)$.
    \item A tuple of model coefficients $(B_0, B_1, B_2)$.
    \item An input to the models $\vb*{x} := (x_1, x_2)$
    \item The result of applying models $A$ and $B$ to $\vb*{x}$, $\hat{y}_A(\vb*{x}) \in [0, 1]$ and $\hat{y}_B(\vb*{x}) \in [0, 1]$.
    \item A modified input to the models $\vb*{x}' = \vb*{x} + \delta \vb*{e}_m$, where $m \in \{1, 2\}$. Both $\vb*{x}'$ and  $\delta \in \mathbb{R}$ are shown to participants. The feature $m$ being modified is also highlighted.
\end{itemize}
Crucially, \emph{participants are given no indication as to what kind of model the coefficients represent}.
This choice was made so as to not prime participants with any expectation of the models' behaviour~\cite{natesan2016cognitive}.
Nor are participants led to believe the models are related to one another.
In fact, \emph{for all scenarios Model A was the linearisation of Model B}.
Model $B$ is the LR model $\hat{y}_B(\vb*{x}) = \sigma(B_0 + B_1 x_1 + B_2 x_2)$, with the Model $A$ coefficients computed using Definition~\ref{def:linearised_additive_models}.
Upon being shown a particular scenario a user is directed to carry out the following tasks:
\begin{description}[noitemsep,leftmargin=.5cm]
    \item[Direction Task.] For both models $A$ and $B$ give the change of direction in model output, that is, compute  $\operatorname{sign}(\hat{y}_A(\vb*{x}') - \hat{y}_A(\vb*{x}))$ and $ \operatorname{sign}(\hat{y}_B(\vb*{x}') - \hat{y}_B(\vb*{x}))$. The options given to  participants are $\{\textsc{``output increases''}, \textsc{``output decreases''}\}$.
    \item[Magnitude Task.]  For both models $A$ and $B$ give the magnitude of the change in model output, that is, compute  $\abs{\hat{y}_A(\vb*{x}') - \hat{y}_A(\vb*{x})}$ and $ \abs{\hat{y}_B(\vb*{x}') - \hat{y}_B(\vb*{x})}$. The options given to participants are $\{\approx \!0.05, \approx\!0.1, \approx\!0.2, \approx\!0.3, \textsc{``don't know''} \}$.
\end{description}

Respondents are shown six scenarios, not including three training scenarios shown at the beginning of the survey.
The training scenarios show the same input $\vb*{x}$ with progressively increasing (and decreasing) inputs $\vb*{x} + \delta_u \vb{e}_m$ such that $\abs{\delta_1} < \abs{\delta_2} < \cdots$ along with the corresponding outputs, so that the user can learn the behaviour of each model class.
The six test scenarios were designed to be balanced, in the sense that positive and negative changes were included as well as positive and negative values for the $x_m$, $(B_0, B_1, B_2)$, so as to not skew the results.
The scenarios were randomly shuffled three times and each permutation assigned to a different subgroup of participants at random.
As a final task after all scenarios are presented to study participants, we query the following.
\begin{description}[noitemsep,leftmargin=.5cm]
\item[Preference Task.] This is a question asked upon completion of the survey about which class of model was easiest to use. The choices are: $\{ \textsc{``model a''}, \textsc{``model b''}, \textsc{``neither''}\}.$
\end{description}
The full user survey is reproduced in Appendix~\ref{pdf:user_survey_questions}.
The survey was sent to 101 possible respondents via the SurveyMonkey platform~\cite{surveymonkey}, who are AI researchers and practitioners in the authors' firm.
Participation was anonymous and on a voluntary basis.
There were 46 respondents in total, from which 36 successfully completed the training scenarios and gave answers to more than 35\% of scenarios. This was the data that was analysed.
Despite the small number of participants, the design of the experiment, providing multiple scenarios per-respondent, meant that we can still report results with statistical significance.

\subsection{Results and Analysis}

We summarise the findings of the User Survey in Figure~\ref{fig:user_study}.
In the Direction Task we see that a slightly greater proportion of correct answers are given for the LAM as opposed to the logistic model.
In the Magnitude Task we see that the logistic models received a response of ``\textsc{don't know}'' most often, followed by an incorrect response and a small number of correct responses.
The linearised counterparts of these models yielded mostly correct answers, followed by ``\textsc{don't know}'', then incorrect answers.
In the Preference Task, the majority of users declared neither model easiest to use, followed closely by the LAM. 
\emph{Only one respondent declared logistic models easiest to use}.

\paragraph{Statistical Analysis.} The Direction and Magnitude Tasks are instances of a \emph{clustered matched-pair binary data} experiment, with the matched pairs being the logistic model and corresponding LAM within each scenario, the binary data comprising a correct vs not correct response to an individual task on each scenario and each cluster corresponding to an individual respondent's answers to multiple scenarios.
We use the statistical test for non-inferiority in clustered matched-pair binary data of~\cite{Yang2012}, which accounts for within cluster correlated responses. 
In our setting this corresponds to an individual's responses possibly being correlated with one another, but independent from the responses of other individuals.
Suppose that $p_{\text{log}}$ is the success probability of a respondent for a given task on a logistic model and $p_{\text{LAM}}$ the corresponding success probability for the linearised version.
The true difference between the two classes of model is $\delta = p_{\text{LAM}} - p_{\text{log}}$.
Choose a small non-inferiority margin\footnote{We choose $\delta_0 = 0.001$} $\delta_0 > 0$.
Then the hypothesis test we are conducting is
$$\text{H}_0 \: : \: p_{\text{LAM}} - p_{\text{log}} \leq \delta_0 ; \qq{vs}  \text{H}_1 \: : \: p_{\text{LAM}} - p_{\text{log}} > \delta_0.$$
\citeauthor{Yang2012}'s $Z_{\text{MO}}$ test statistic asymptotically follows a normal distribution assuming the null hypothesis $\text{H}_0$.
For the Direction Task we observe $Z_{\text{MO}} = 2.822$ corresponding to a $p$-value of $0.0024$, which is significant at the $\alpha = 0.05$ level.
Moreover, the 95\% confidence interval for the difference in success rates between LAMs and logistic models was $\delta \in (0.03, 0.16)$.
This small value of the performance difference in the Direction Task is expected, as for both logistic models and LAMs we can easily inspect the signs of the model coeficients to get the directions, a strategy which many respondents guessed from the training scenarios.
For the Magnitude Task we observe $Z_{\text{MO}} = 4.55$ corresponding to a $p$-value of $2.72 \times 10^{-6}$, which is significant at the $\alpha = 0.05$ level.
The 95\% confidence interval for the difference in success rates between LAMs and logistic models was $\delta \in (0.23, 0.50)$, a significant gap.
We attribute this gap to inherent non-interpretability of reasoning in log-odds space vs reasoning directly in probabilities.

For the Preference Task, the data correspond to matched pairs, each pair belonging to one participant and corresponding to a preference of logistic models vs LAM models, that is, possible responses are $\{A \prec B, A \succ B, A \sim B \}$, with Model $A$ corresponding to LAM and Model $B$ to logistic.  
As there is no numerical or ranked comparison, the usual appropriate statistical test is the Sign Test~\cite{sign_test}.
Given there are many ties $A \sim B$, we use the Trinomial Test of~\cite{BIAN20111153}, specially developed for this regime. See Appendix~\ref{sec:trinomial} for full details).
Let $p_A$ denote the probability a randomly chosen participant prefers LAMs to logistic models and let $p_B$ denote the converse.
Moreover, $p_0$ is the probability neither is preferred.
Then our null hypothesis is $\text{H}_0 : p_A = p_B$ and alternative is  $\text{H}_1 : p_A > p_B$.
Let $N_A$, $N_B$ and $N_0$ be the random variables denoting the observed counts corresponding to Model A preferred, Model B preferred and neither respectively, with $N := N_A + N_B + N_0$.
Assuming $\text{H}_0$, the test statistic $N_d = N_A - N_B$ has critical value at significance $\alpha = 0.05$ of $C_{0.05} = 6$, which is exceeded in our data with $n_d = 12$, corresponding to a $p$-value of 0.0009, which is statistically significant.
This supports there being a user preference for LAMs over logistic models, although the impact of this is somewhat dampened by the comparatively large number of responses preferring neither.

\paragraph{Findings.} We interpret these findings as follows: as measured by performance in basic reasoning tasks about model behaviour, \emph{logistic models are less interpretable than their linearised counterparts}.
This difference is more pronounced when it comes to reasoning about actual numerical model outputs in response to varying input variables as opposed to reasoning just about the general direction of change.
Interestingly, in spite of the gulf in respondents' performance between the LAMs and logistic models this difference is not necessarily felt strongly by the respondents themselves, a large number of whom stated that neither class of models was easier to reason about.
This suggests that when providing explanations for models in production, there should be a mechanism for confirming to consumers of explanations that their interpretation is correct, to increase confidence. 
In this study, consumers of LAM explanations were correctly interpreting them \emph{without even knowing what the underlying model was}, but were not generally confident in their interpretations.
For logistic models, participants did not correctly interpret the explanations and did not declare them easy to use, in spite of their preexisting modelling expertise.

\section{Conclusion}
\label{sec:conclusion}
This work introduces techniques for improving the interpretability characteristics of existing models while incurring only very small penalties in classification performance. 
For an additive logistic model such as ARM, one can use our linearisation scheme (LAM) with the model and incur only a very small reduction in ROC-AUC and a small increase in calibration error.
Via the User Study, we showed that when participants are required to simulate the output of a model, that is, predict its behaviour, their performance was far better with linearised models than their logistic counterparts.

\paragraph{Lessons Learned.}

In this work we closely examined a common tacit assumption within the XAI literature, that there is no reduction in interpretability of GAMs when moving from regression to classification via a non-linear link function (here, the logistic function).
We showed that this assumption in fact does not hold in general, with the commonly used explanation method of sharing model weights or shape functions proving to be misleading to human respondents.
Our LAM construction is shown to mostly overcome this issue while largely preserving an underlying model's behaviour.
The implication here is that when providing model coefficients as explanations, it may be worth paying a small price in performance by linearising a trained logistic model to ensure explanations are correctly understood. 

\paragraph{Limitations and Future Work.}
With this work we must take the following into consideration.
\setlist{nolistsep}
\begin{itemize}[noitemsep,leftmargin=.5cm]
\item{\textbf{Model Certainty.}} 
For a LAM $\hat{y}_{\text{LAM}}$ induced by a logistic model $ \hat{y}$, inputs $\vb*{x}$ such that $\hat{y}(\vb*{x}) \not\in [0.07, 0.93]$ correspond to LAM outputs $\hat{y}_{\text{LAM}}(\vb*{x}) \in \{0, 1\}$.
Risk scores of $\leq 7\%$ and $\geq 93\%$ correspond to confident predictions.
LAMs effectively round these risk scores to certainty -- we show empirically the prevalence of this phenomenon in Table~\ref{tab:full_certainty} (Appendix).
If a difference in risk score between, say, 97\% and 99.99\% is important, then using a LAM may not be appropriate. 
Possible mitigations are: \emph{i.} clip to the interval $[\epsilon, 1 - \epsilon]$ for some small $\epsilon > 0$; or \emph{ii.} increase $\alpha$ in the approximation to the sigmoid, thereby growing the set of model inputs with output in $(0, 1)$, while incurring a penalty in approximation.
\item{\textbf{Alternate Linearisation Schemes.}} 
One could consider alternate methods of linearising as opposed to LAMs. As an example, Average Marginal Effects (AME)~\cite{scholbeck2024marginal,Bartus2005} are local model explanations based on evaluating model prediction function derivatives.
The AME values could potentially be interpreted as coefficients of a clipped linear model, although they would lack the theoretical guarantees afforded by LAM.
\item{\textbf{User Study Scope.}} One could increase the scope of the User Study in terms of tasks and information provided to users, considering interpretation of the model coefficients integrated as part of a downstream task, as opposed to predicting model outputs being the tasks' focus.
A potentially fruitful investigation is measuring the effect of linearisation on interpretability of general shape functions in GAMs, since LR models have linear shape functions.
\end{itemize}

{
{
\section{Disclaimer}
This paper was prepared for informational purposes by
the Artificial Intelligence Research group of JPMorgan Chase \& Co. and its affiliates (``JP Morgan''),
and is not a product of the Research Department of JP Morgan.
JP Morgan makes no representation and warranty whatsoever and disclaims all liability,
for the completeness, accuracy or reliability of the information contained herein.
This document is not intended as investment research or investment advice, or a recommendation,
offer or solicitation for the purchase or sale of any security, financial instrument, financial product or service,
or to be used in any way for evaluating the merits of participating in any transaction,
and shall not constitute a solicitation under any jurisdiction or to any person,
if such solicitation under such jurisdiction or to such person would be unlawful.
}
}


\bibliographystyle{nicebib-alpha}
{
\bibliography{references_2}
}

\clearpage
\appendix
\onecolumn
\section{LAM Derivation and Proofs}\label{sec:lam_derivation}



\subsection{Approximating the Sigmoid.}

\begin{definition}[Piecewise Linear Functions]
A piecewise linear function on one variable $f(x)$ with $n$ pieces determined by the $n + 1$ points $(x_1, y_1), \ldots, (x_{n + 1}, y_{n + 1})$ is given by 
\begin{equation}
    f_{(x_1, y_1), \ldots, (x_{n + 1}, y_{n + 1})}(x) = 
    \begin{cases}
        m_1 x + c_1, & x \leq x_2 ; \\
        m_2 x + c_2, & x_2 < x \leq x_3; \\ 
        \vdots \\
        m_{n} x + c_{n}, &  x >  x_n,
    \end{cases}
\end{equation}
where $m_k = \frac{y_{k + 1} - y_{k}}{x_{k + 1} - x_{k}}$ and $c_k = y_k - m_k x_k $ and we demand $x_1 < x_2 < \cdots < x_{n+1}$.

\begin{definition}[$\mathrm{PL}_n$] The space of piecewise linear functions comprised of $n$ pieces, $\mathrm{PL}_n$, is defined as
$$\mathrm{PL}_n = \left\{ f_{(x_1, y_1), \ldots, (x_{n + 1}, y_{n + 1})} \,\middle\vert\,  y_1, \ldots, y_{n+1} \in \mathbb{R},\ -\infty< x_1 < x_2 < \cdots < x_{n+1} < \infty\right\}.$$
\end{definition}

\end{definition}

For a function $f(x)$ that is to be approximated by $\widetilde{f}(x)$, the \emph{squared error} is defined as
$\operatorname{SE} = \int_{-\infty}^{+\infty} (\widetilde{f}(x) - f(x))^2 \dd x$.
The squared error is finite when $\widetilde{f}(x) - f(x)$ is square integrable.

\begin{proposition}\label{prop:opt_approx_sigmoid}
   Let $\piecelin$ be the space of 3-piece piecewise linear functions of one variable and consider the family of functions in $\piecelin$ parameterised by $\alpha > 0$
   \begin{equation}\label{eq:clip_lin_approx}
       \widetilde{\sigma}(x; \alpha) = 
    \begin{cases}
        0, & x \in (-\infty, -\alpha) ; \\
        \frac{1}{2} + \frac{x}{2\alpha}, & x \in [-\alpha, +\alpha] ; \\
        1, &  x \in (\alpha, +\infty).
    \end{cases}
   \end{equation}
   Then $\widetilde{\sigma}(\,\cdot\,; \alpha^\star)$ with $\alpha^\star \approx 2.5996 \approx \frac{80000}{30773}$ is the squared-error optimal approximator to the logistic sigmoid $\sigma(z) = (1 + e^{-z})^{-1}$, that is
    $$\widetilde{\sigma}(\,\cdot\,; \alpha^\star) = \arg\min_{\widetilde{\sigma} \in \piecelin} \qty{ \int_{-\infty}^{+\infty} \qty(\widetilde{\sigma}(z) - \sigma(z))^2 \dd z } .$$
\end{proposition}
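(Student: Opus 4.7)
The plan is to reduce the infinite-dimensional optimisation over $\mathrm{PL}_3$ to a one-parameter calculus problem and then solve for the minimiser.

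First, I would show that any $\widetilde{\sigma}\in\mathrm{PL}_3$ with finite squared error must have its leftmost piece identically $0$ and its rightmost piece identically $1$. Since $\sigma(x)\to 0$ exponentially as $x\to-\infty$, $(m_1 x+c_1-\sigma(x))^2$ behaves like $(m_1 x+c_1)^2$ at $-\infty$, which fails to be integrable unless $m_1=c_1=0$. Symmetrically, right-tail integrability forces $m_3=0$ and $c_3=1$. Because piecewise linear functions in the sense of the definition are continuous (consecutive pieces share the breakpoint $(x_k,y_k)$), the middle piece is pinned to the segment from $(x_2,0)$ to $(x_3,1)$ with slope $1/(x_3-x_2)$.

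Second, I would reparameterise the two remaining degrees of freedom as $\alpha=(x_3-x_2)/2$ and $\beta=(x_2+x_3)/2$, so that every admissible candidate is a translate $\widetilde{\sigma}(\,\cdot\,-\beta\,;\alpha)$ of the ansatz (\ref{eq:clip_lin_approx}). After translation, the two self-energy terms in $F(\alpha,\beta)=\int(\widetilde{\sigma}(x;\alpha)-\sigma(x+\beta))^2\dd x$ are independent of $\beta$, so minimising $F$ amounts to maximising $G(\beta)=\int g(x)\sigma(x+\beta)\dd x$, where $g(x)=\widetilde{\sigma}(x;\alpha)-\tfrac{1}{2}$ is odd. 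The substitution $x\mapsto -x$ combined with $\sigma(-u)=1-\sigma(u)$ and $\int g=0$ yields $G(\beta)=G(-\beta)$, hence $G'(0)=0$; and since $\sigma''$ is odd with sign opposite to $g$ pointwise, $G''(0)<0$. Extending this sign analysis to all $\beta$ (or equivalently proving strict concavity of $G$ on $\mathbb{R}$) pins the unique optimum to $\beta=0$, reducing the problem to the family (\ref{eq:clip_lin_approx}).

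Third, with $\beta=0$ fixed, and using the symmetry $\sigma(-x)=1-\sigma(x)$ to fold the two outer integrals together, the objective becomes
$$F(\alpha)=2\int_\alpha^{\infty}\bigl(1-\sigma(x)\bigr)^2\dd x+\int_{-\alpha}^{\alpha}\Bigl(\tfrac{1}{2}+\tfrac{x}{2\alpha}-\sigma(x)\Bigr)^2\dd x.$$
Differentiating via Leibniz's rule, the boundary terms at $x=\pm\alpha$ cancel because the middle piece meets the outer constants continuously; the surviving contribution simplifies to a scalar multiple of $\int_{-\alpha}^{\alpha} x\bigl(\tfrac{1}{2}+\tfrac{x}{2\alpha}-\sigma(x)\bigr)\dd x$. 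Using the antiderivative $\int\sigma(x)\dd x=\log(1+e^x)$ and the standard dilogarithm form of $\int x\sigma(x)\dd x$, the stationarity condition $F'(\alpha^\star)=0$ becomes a single transcendental equation whose numerical root is $\alpha^\star\approx 2.5996$; the rational $80000/30773$ is then its best low-denominator convergent.

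The principal obstacle is the second step: rigorously ruling out non-trivial global maximisers of $G(\beta)$, since the breakpoints of $\widetilde{\sigma}(\,\cdot\,-\beta\,;\alpha)$ drift with $\beta$ and convexity in $\beta$ is not manifest. I anticipate resolving this by an explicit monotonicity argument on $G'(\beta)=\int g(x)\sigma'(x+\beta)\dd x$ exploiting the three-region structure of $g$, which forces $\operatorname{sign}(G'(\beta))=-\operatorname{sign}(\beta)$.
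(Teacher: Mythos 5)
Your proposal is correct and follows the same three-stage skeleton as the paper's proof: tail integrability forces the outer pieces to be the constants $0$ and $1$; a centering argument forces the middle segment to be symmetric about the origin; and the remaining one-parameter problem is settled numerically (the paper computes $\operatorname{SE}(\alpha)$ in closed form via dilogarithms and minimises with Newton's method, while you set $F'(\alpha)=0$ and solve the resulting transcendental equation — equivalent, and at the same level of numerical rigor). Where you genuinely differ is the centering step: the paper disposes of it with an informal appeal to the $180^\circ$ rotational symmetry of $\sigma$ about $(0,\tfrac12)$, whereas you reduce it to the sign of $G'(\beta)=\int g(x)\,\sigma'(x+\beta)\,\dd x$. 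Your anticipated argument does close, and easily: since $\sigma'$ is even and $g$ is odd and non-decreasing, $G'(\beta)=\int_0^\infty\bigl[g(u-\beta)-g(u+\beta)\bigr]\sigma'(u)\,\dd u$, which is strictly negative for $\beta>0$ and strictly positive for $\beta<0$ (the bracket is $\le 0$ and nonzero wherever $u\pm\beta$ straddle the ramp), so $\beta=0$ is the unique optimum for every fixed $\alpha$; this is tidier and more rigorous than the paper's symmetry appeal. One bookkeeping caution: as literally written your ``self-energy'' decomposition is not legitimate, because $\int g^2$, $\int\bigl(\sigma(\cdot+\beta)-\tfrac12\bigr)^2$ and $G(\beta)=\int g(x)\sigma(x+\beta)\,\dd x$ all diverge individually ($g^2\to\tfrac14$ and $g\sigma\to\tfrac12$ as $x\to+\infty$), and $\int g=0$ only holds as a principal value; you should phrase the reduction as $F(\alpha,\beta)-F(\alpha,0)$ computed under a symmetric cutoff $[-R,R]$, in which the $\beta$-dependence of the regularised self-energies vanishes as $R\to\infty$ and only the absolutely convergent cross-difference survives, whose $\beta$-derivative is exactly your $G'(\beta)$. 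With that repair, and a remark (as in the paper) that $F(\alpha)\to\infty$ as $\alpha\to\infty$ so the stationary point is the global minimiser, your argument is complete.
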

\begin{proof}
The proof proceeds in two parts. We first show via symmetry arguments that the minimising approximator belongs to a one-parameter family of approximate functions, then we minimise squared error as a function of the parameter.

Consider the leftmost piece of the approximate function $\widetilde{\sigma}$, which takes the values $m_1 z + c_1$. 
Then we have $ \qty(\widetilde{\sigma}(z) - \sigma(z))^2 \sim (m_1 z + c_1)^2$ as $z \to -\infty$ and so the error integral diverges if $m_1 \neq 0$, that is, the first piece is a constant function $c_1$.
Moreover, suppose $m_1 = 0$, $c_1 \neq 0$.
Then the error integral will also diverge.
Thus the first piece of the optimal approximator $\widetilde{\sigma}$ is the constant function $z \mapsto 0$.
A similar argument considering the limit $z \to \infty$ demands that the third (rightmost) piece $m_3 z + c_3$ has $m_3 = 0$, $c_3 = 1$.

It remains to determine the middle line segment, with $(x, y)$-coordinates $(x_2, 0)$, $(x_3, 1)$.
We can see that for the optimal approximator we require $x_2 = - x_3$ by the following: we can reparametrise the middle line segment by its center $c = \frac{x_2 + x_3}{2}$ and its halfwidth $\alpha = \frac{x_3 - x_2}{2}$.
For a fixed halfwidth $w$ we can consider varying the center $c$ in order to minimise the error.
Since $\sigma(z)$ has $180^\circ$ rotational symmetry about the point $(0, \frac{1}{2})$, all else held equal the optimal approximator should also satisfy this same symmetry as the error integral will penalise violations of this symmetry.
The center $c$ is the only variable parameter remaining and the only center that gives rise to an approximator satisfying this symmetry given a fixed width is $c = 0$.

Thus our optimal approximator $\widetilde{\sigma} \in \piecelin$ has the functional form~\eqref{eq:clip_lin_approx} parameterised by $\alpha > 0$.
We refer to this function as $\widetilde{\sigma}(x; \alpha)$ from now on.
In Fig.~\ref{fig:opt_sig_approx} (right) we see an example of $\widetilde{\sigma}(x, \alpha^\star)$ with the optimally chosen value of $\alpha$.
Toward the goal of minimising with respect to $\alpha$, we compute the error integral as a function of $\alpha$
\begin{equation*}
\begin{aligned}
\operatorname{SE}(\alpha) &= 
    \int_{-\infty}^{+\infty} (\widetilde{\sigma}(z; \alpha) - \sigma(z))^2 \dd z \\
    &= \int_{-\infty}^{-\alpha} \qty(\sigma(z))^2 \, \dd z  + \int_{-\alpha}^{+\alpha} \left(\frac{1}{2}  + \frac{z}{2\alpha} - \sigma(z)\right)^2 \, \dd z + \int_{+\alpha}^{+\infty} \qty(1 - \sigma(z))^2 \, \dd z \\
    &= -\frac{1}{3\alpha} \bigg(7 \, {\alpha}^{2} + 6 \, {\alpha} \log {\left(1 + e^{{-\alpha}}\right)} - 6 \, {\alpha} \log\left(e^{{\alpha}} + 1\right) + 3 \, {\alpha} - 3 \, {\rm Li}_2\left(-e^{-{\alpha}}\right) + 3 \, {\rm Li}_2\left(-e^{{\alpha}}\right) \bigg),
    \end{aligned}
\end{equation*}
where ${\rm Li}_2$ is Spence's dilogarithm function, defined as ${\rm Li}_2(z) := -\int^z_0 \frac{\ln(1-u)}{u} \dd u$ for $z \in \mathbb{C}\setminus [1, \infty)$.
The closed form of $\operatorname{SE}(\alpha)$ was obtained using the SageMath system~\citeSM{sagemath}. 
The function $\operatorname{SE}(\alpha) $ is plotted in Fig.~\ref{fig:opt_sig_approx} (left) for $\alpha > 0$, is convex and takes its minimum at $\alpha^\star \approx 2.5996 \approx \frac{80000}{30773}$, with this minimum obtained via Newton’s method\footnote{We were unable solve this minimisation in closed form.}.
Thus the function $\widetilde{\sigma}(z; \alpha^\star)$ is the minimiser of squared error in $\piecelin$ and the result is proved.
\end{proof}

\begin{figure}
    \centering
    \begin{subfigure}[b]{0.46\textwidth}
        \centering
        \resizebox{\columnwidth}{!}{%
\begin{tikzpicture}[font=\LARGE]

\begin{axis}[
clip=false,
height=8cm,
legend cell align={left},
legend style={
  fill opacity=0.8,
  draw opacity=1,
  text opacity=1,
  at={(0.97,0.03)},
  anchor=south east,
  draw=white!80!black
},
tick align=outside,
tick pos=left,
width=12cm,
x grid style={white!69.0196078431373!black},
xlabel={\(\displaystyle \alpha\)},
xmin=-0.49895, xmax=10.49995,
xtick style={color=black},
y grid style={white!69.0196078431373!black},
ylabel={Squared Error},
ymin=-0.0311610017733549, ymax=0.872213469432495,
ytick style={color=black}
]
\addplot [semithick, black, forget plot]
table {%
0.001 0.385961111120247
0.0512462311557789 0.369431117718733
0.101492462311558 0.35332171273151
0.151738693467337 0.337632412652916
0.201984924623116 0.322362417802712
0.252231155778894 0.307510615191842
0.302477386934673 0.293075582492518
0.352723618090452 0.279055593079707
0.402969849246231 0.265448622102982
0.45321608040201 0.252252353539933
0.503462311557789 0.23946418817513
0.553708542713568 0.227081252442394
0.603954773869347 0.215100408062531
0.654201005025126 0.203518262404262
0.704447236180905 0.192331179492294
0.754693467336684 0.181535291583966
0.804939698492462 0.171126511234094
0.855185929648241 0.161100543766891
0.90543216080402 0.151452900074015
0.955678391959799 0.142178909658688
1.00592462311558 0.133273733847655
1.05617085427136 0.124732379095265
1.10641708542714 0.116549710306954
1.15666331658291 0.108720464113248
1.20690954773869 0.101239262029429
1.25715577889447 0.0941006234405752
1.30740201005025 0.0872989783565181
1.35764824120603 0.0808286798862533
1.40789447236181 0.0746840163865079
1.45814070351759 0.0688592232443939
1.50838693467337 0.0633484942592246
1.55863316582915 0.0581459925937176
1.60887939698492 0.0532458612697599
1.6591256281407 0.0486422331887337
1.70937185929648 0.0443292406609463
1.75961809045226 0.0403010244330563
1.80986432160804 0.0365517422064123
1.86011055276382 0.0330755766429967
1.9103567839196 0.0298667428590804
1.96060301507538 0.0269194954098286
2.01084924623116 0.0242281347709406
2.06109547738693 0.0217870133258369
2.11134170854271 0.0195905408691736
2.16158793969849 0.0176331896393115
2.21183417085427 0.0159094988939936
2.26208040201005 0.0144140790448647
2.31232663316583 0.0131416153674647
2.36257286432161 0.0120868713043507
2.41281909547739 0.0112446913794512
2.46306532663317 0.0106100037423955
2.51331155778894 0.0101778223616607
2.56355778894472 0.0099432488856377
2.6138040201005 0.00990147419054735
2.66405025125628 0.0100477796341037
2.71429648241206 0.0103775380334353
2.76454271356784 0.0108862143855143
2.81478894472362 0.0115693663478071
2.8650351758794 0.0124226444964397
2.91528140703518 0.0134417923785184
2.96552763819095 0.0146226463747096
3.01577386934673 0.0159611353874945
3.06602010050251 0.0174532803698515
3.11626633165829 0.0190951937084941
3.16651256281407 0.0208830784749807
3.21675879396985 0.022813227557485
3.26700502512563 0.0248820226851223
3.31725125628141 0.0270859333562607
3.36749748743719 0.0294215156813865
3.41774371859296 0.0318854111505416
3.46798994974874 0.0344743453346866
3.51823618090452 0.0371851265297126
3.5684824120603 0.0400146443512288
3.61872864321608 0.0429598682876753
3.66897487437186 0.0460178462187793
3.71922110552764 0.0491857029058264
3.76946733668342 0.0524606384596942
3.8197135678392 0.0558399267921794
3.86995979899498 0.0593209140556436
3.92020603015075 0.0629010170755928
3.97045226130653 0.0665777217804074
4.02069849246231 0.0703485816320635
4.07094472361809 0.0742112160613012
4.12119095477387 0.078163308910421
4.17143718592965 0.0822026068864899
4.22168341708543 0.0863269180275521
4.27192964824121 0.0905341101840744
4.32217587939699 0.0948221095176622
4.37242211055276 0.0991888990187999
4.42266834170854 0.103632517045242
4.47291457286432 0.108151055882353
4.5231608040201 0.112742660326602
4.57340703517588 0.11740552629324
4.62365326633166 0.122137899448977
4.67389949748744 0.126938073870403
4.72414572864322 0.131804390728659
4.774391959799 0.13673523700093
4.82463819095477 0.141729044208969
4.87488442211055 0.146784287185015
4.92513065326633 0.151899482865154
4.97537688442211 0.157073189110283
5.02562311557789 0.162304003554615
5.07586934673367 0.167590562481687
5.12611557788945 0.172931539727683
5.17636180904523 0.178325645611985
5.22660804020101 0.183771625894629
5.27685427135678 0.189268260760423
5.32710050251256 0.194814363829414
5.37734673366834 0.200408781193379
5.42759296482412 0.206050390477906
5.4778391959799 0.211738099929729
5.52808542713568 0.217470847528862
5.57833165829146 0.223247600125096
5.62857788944724 0.229067352598409
5.67882412060302 0.234929127042801
5.72907035175879 0.240831971973151
5.77931658291457 0.246774961554482
5.82956281407035 0.252757194853294
5.87980904522613 0.258777795110334
5.93005527638191 0.264835909034405
5.98030150753769 0.270930706116675
6.03054773869347 0.277061377964991
6.08079396984925 0.283227137657715
6.13104020100503 0.289427219116579
6.1812864321608 0.295660876498075
6.23153266331658 0.301927383602936
6.28177889447236 0.308226033303131
6.33202512562814 0.314556136986025
6.38227135678392 0.320917024015164
6.4325175879397 0.327308041207232
6.48276381909548 0.333728552324774
6.53301005025126 0.340177937584203
6.58325628140704 0.346655593178652
6.63350251256282 0.353160930815287
6.68374874371859 0.359693377266662
6.73399497487437 0.366252373935625
6.78424120603015 0.372837376433532
6.83448743718593 0.379447854171204
6.88473366834171 0.386083289962393
6.93497989949749 0.392743179639313
6.98522613065327 0.399427031679864
7.03547236180905 0.406134366846258
7.08571859296482 0.412864717834648
7.1359648241206 0.419617628935436
7.18621105527638 0.426392655703959
7.23645728643216 0.433189364641221
7.28670351758794 0.440007332884337
7.33694974874372 0.44684614790645
7.3871959798995 0.453705407225755
7.43744221105528 0.460584718123415
7.48768844221106 0.467483697370053
7.53793467336684 0.474401970960554
7.58818090452261 0.481339173856952
7.63842713567839 0.488294949739092
7.68867336683417 0.495268950762901
7.73891959798995 0.50226083732593
7.78916582914573 0.509270277840061
7.83941206030151 0.516296948511
7.88965829145729 0.523340533124548
7.93990452261307 0.530400722839193
7.99015075376885 0.537477215985039
8.04039698492462 0.544569717868755
8.0906432160804 0.551677940584365
8.14088944723618 0.558801602829747
8.19113567839196 0.565940429728569
8.24138190954774 0.573094152657641
8.29162814070352 0.580262509079308
8.3418743718593 0.587445242378907
8.39212060301508 0.594642101707063
8.44236683417085 0.601852841826589
8.49261306532663 0.609077222964019
8.54285929648241 0.616315010665449
8.59310552763819 0.623565975656665
8.64335175879397 0.63082989370742
8.69359798994975 0.638106545499603
8.74384422110553 0.645395716499385
8.79409045226131 0.652697196833043
8.84433668341708 0.660010781166385
8.89458291457287 0.667336268587768
8.94482914572864 0.674673462494414
8.99507537688442 0.682022170482114
9.0453216080402 0.689382204238046
9.09556783919598 0.696753379436749
9.14581407035176 0.704135515639025
9.19606030150754 0.711528436193806
9.24630653266332 0.718931968142774
9.2965527638191 0.726345942127767
9.34679899497488 0.733770192300752
9.39704522613065 0.741204556236409
9.44729145728643 0.748648874847188
9.49753768844221 0.756102992300733
9.54778391959799 0.763566755939664
9.59803015075377 0.771040016203622
9.64827638190955 0.778522626553433
9.69852261306533 0.786014443397507
9.74876884422111 0.793515326020191
9.79901507537688 0.801025136512151
9.84926130653266 0.80854373970268
9.89950753768844 0.816071003093885
9.94975376884422 0.823606796796696
10 0.831150993468593
};
\addplot [semithick, black, dashed]
table {%
2.59964289804225 -0.0311610017733549
2.59964289804225 0.872213469432495
};
\addlegendentry{$\alpha^\star = 2.5996...$}
\end{axis}

\end{tikzpicture}
        }
     \end{subfigure}
     \begin{subfigure}[b]{0.46\textwidth}
        \centering
        \resizebox{\columnwidth}{!}{%
        \input{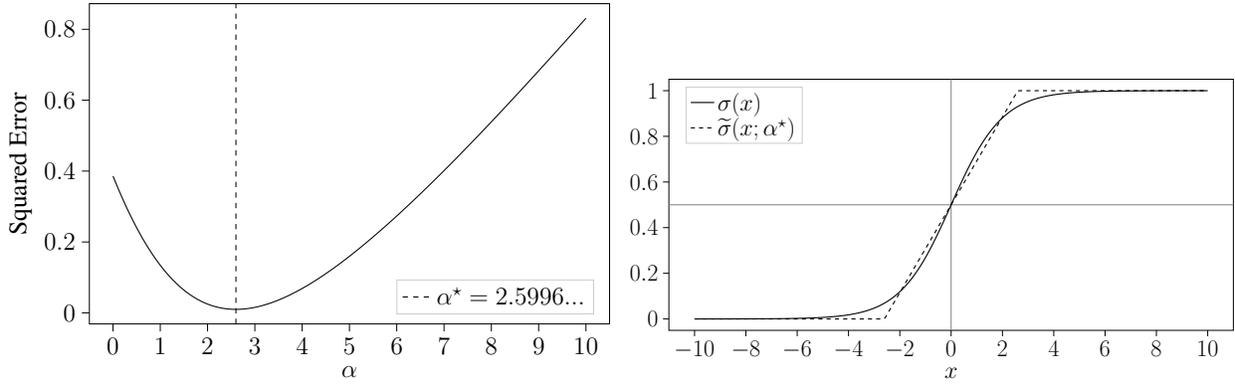}
        }
     \end{subfigure} 
    \caption{Squared error for linear approximation to sigmoid (left panel) and optimal (in square error) clipped linear approximation $\widetilde{\sigma}(x; \alpha \approx 2.5996)$ to sigmoid function $\sigma(x)$ (right panel).}
    \label{fig:opt_sig_approx}
\end{figure}



\subsection{LAM Definition Motivation.}

Recall that an \emph{additive} model takes the form $\hat{y}(\vb*{x}) = \sigma(f(\vb*{x})) = \sigma ( \sum_{i=0}^{d} \beta_i f_i(x_i) )$,
where $\beta_i, x_i \in \mathbb{R}$ and $f_i : \mathbb{R} \to \mathbb{R}$ for all $i \in [d]$, $f_0: x \mapsto 1$.
Observe in the case of an additive model we have  for each $\vb*{x} '\in \{\vb*{z} \mid f(\vb*{z}) \in [-\alpha, \alpha] \}$, 
$\widetilde{\sigma}(f(\vb*{x}'); \alpha) = \frac{1}{2} + \sum_{i=0}^{d} \frac{\beta_i}{2 \alpha^\star} f_i (x'_i) =: \widetilde{f}(\vb*{x}')$, which follows immediately from linearity.
We shall call $\widetilde{f}$ the \emph{$\alpha^\star$-linearised model} relative to $f$.
Recall that $\projunit$ is the projector from $\mathbb{R}$ onto the unit interval, that is $\projunit(z) = \max(0, \min(1, z))$.
We then have the following result.
\begin{lemma}
\label{lem:add_model_equiv}
    Suppose $f$ is an additive model as defined in Definition~2.2 with $\widetilde{f}$  its $\alpha^\star$-linearised version.
    Then,
    $\widetilde{\sigma}(f(\vb*{x}); \alpha^\star) = \projunit(\widetilde{f}(\vb*{x}))$,
    that is, for an additive model $f$ the optimal piecewise linear approximation in $\piecelin$ to the logistic sigmoid evaluated on the output of $f$ is formally equivalent to the output of the $\alpha^\star$-linearised model relative to $f$, projected on the unit interval.
\end{lemma}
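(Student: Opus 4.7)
The plan is to verify the claimed equality by case analysis on the three pieces of $\widetilde{\sigma}(\,\cdot\,;\alpha^\star)$, using the key observation that the middle linear piece of $\widetilde{\sigma}$ agrees identically with $\widetilde{f}$ on the domain where it is active. First I would make explicit the algebraic identity
\begin{equation*}
\widetilde{f}(\vb*{x}) \;=\; \frac{1}{2} + \sum_{i=0}^{d} \frac{\beta_i}{2\alpha^\star} f_i(x_i) \;=\; \frac{1}{2} + \frac{f(\vb*{x})}{2\alpha^\star},
\end{equation*}
which follows from the definition of $\widetilde{f}$ as the $\alpha^\star$-linearised model and the additive form of $f$. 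This identity is the bridge between the two sides.

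With this in hand, the proof splits into three cases according to which piece of $\widetilde{\sigma}$ the value $f(\vb*{x})$ falls into. In the middle case $f(\vb*{x}) \in [-\alpha^\star, \alpha^\star]$, the identity above gives $\widetilde{\sigma}(f(\vb*{x});\alpha^\star) = \tfrac{1}{2} + \tfrac{f(\vb*{x})}{2\alpha^\star} = \widetilde{f}(\vb*{x})$, and moreover the same bounds on $f(\vb*{x})$ imply $\widetilde{f}(\vb*{x}) \in [0,1]$, so $\projunit$ acts as the identity. In the left case $f(\vb*{x}) < -\alpha^\star$, we have $\widetilde{\sigma}(f(\vb*{x});\alpha^\star) = 0$ while the identity yields $\widetilde{f}(\vb*{x}) < 0$, so $\projunit(\widetilde{f}(\vb*{x})) = 0$. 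The right case $f(\vb*{x}) > \alpha^\star$ is symmetric and gives $1$ on both sides.

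Since the three cases exhaust all possible values of $f(\vb*{x}) \in \mathbb{R}$ and each yields equal outputs, the pointwise equality $\widetilde{\sigma}(f(\vb*{x});\alpha^\star) = \projunit(\widetilde{f}(\vb*{x}))$ holds for every $\vb*{x}$. I do not anticipate a genuine obstacle here: the lemma is essentially a consequence of the piecewise definition of $\widetilde{\sigma}$ together with linearity of $f$, and the main care needed is just to track the boundary values $\pm\alpha^\star$ consistently so that the three cases match at the seams.
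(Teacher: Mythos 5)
Your proposal is correct and follows essentially the same route as the paper: the explicit identity $\widetilde{f}(\vb*{x}) = \tfrac{1}{2} + \tfrac{f(\vb*{x})}{2\alpha^\star}$ followed by a three-way case analysis on whether $f(\vb*{x})$ lies in $[-\alpha^\star,\alpha^\star]$, above it, or below it. (Your statement of the left case even fixes a small typo in the paper, which writes $\gamma < \alpha^\star$ where $\gamma < -\alpha^\star$ is meant.)
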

\begin{proof}
We proceed via case analysis. 
For brevity we denote $\widetilde{\sigma}(z; \alpha^\star)$ by $\widetilde{\sigma}(z)$.
\emph{Case i.} When $f(\vb*{x}) \in [-\alpha^\star, \alpha^\star]$ the required identity follows immediately from the definitions of $\widetilde{\sigma}$, $\widetilde{f}$ and $\projunit$.
Now suppose that $f(\vb*{x}) = \sum_{i=1}^{d} \beta_i f_i(x_i) = \gamma$.
It follows from substitution into the definition of $\widetilde{f}$ that 
\begin{equation}
\label{eq:approx_lemma_case_ii_iii}
\widetilde{f}(\vb*{x}) = \frac{1}{2}\qty(1 + \frac{\gamma}{\alpha^\star})
\end{equation}
\emph{Case ii.} When $\gamma > \alpha^\star$ we have that $\widetilde{f}(\vb*{x}) > 1$ by substitution into~\eqref{eq:approx_lemma_case_ii_iii}, in which case $\projunit\qty(\widetilde{f}(\vb*{x})) = 1 = \widetilde{\sigma}(\gamma) = \widetilde{\sigma}(f(\vb*{x}))$, yielding the required identity.
\emph{Case iii.} When $\gamma < \alpha^\star$ we have that $\widetilde{f}(\vb*{x}) < 0$ by substitution into~\eqref{eq:approx_lemma_case_ii_iii}, in which case $\projunit\qty(\widetilde{f}(\vb*{x})) = 0 = \widetilde{\sigma}(\gamma) = \widetilde{\sigma}(f(\vb*{x}))$, yielding the required identity.
We have shown that the identity holds for all values of $f(\vb*{x})$ and so the result is proved.
\end{proof}

Lemma~\ref{lem:add_model_equiv} provides the inspiration for the LAM definition (Definition~2.2 in the main text).

\subsection{Optimality Proof}

We are now able to present the LAM optimality proof for LR models. 
Recall that for an logistic model $\hat{y}(\vb*{x}) = \sigma(f(\vb*{x})) = \sigma ( \sum_{i=0}^{d} \beta_i f_i(x_i) )$, the induced LAM is given by  $\hat{y}_{\mathrm{LAM}}(\vb*{x}) = \projunit (\frac{1}{2} +  \sum_{i=0}^{d} \frac{\beta_i}{2 \alpha^\star} f_i(x_i) )$ for $\alpha^\star \approx 2.5996$.
We restate the result from the main text.

\textbf{Theorem 2.4} (LAM Optimality)\textbf{.}{\em   \ 
 Let $\piecelin$ be the space of 3-piece piecewise linear functions of one variable and $\mathcal{X} = \mathbb{R}^d$.
    For any LR model $\hat{y}(\vb*{x}) = \sigma(f(\vb*{x})) = \sigma(\beta_0 + \sum_{i = 1}^d \beta_i x_i)$ on $\mathcal{X}$ an approximator  $\widetilde{\sigma}(f(\vb*{x}))$ is defined for all $\widetilde{\sigma} \in \piecelin$. 
    Then, $\hat{y}_{\mathrm{LAM}}$ is the squared-error optimal approximator for arbitrary $f$, that is,
    $$\hat{y}_{\mathrm{LAM}}(\vb*{x}) = \widetilde{\sigma}(f(\vb*{x}); \alpha^\star) \qq{where}\widetilde{\sigma}(\,\cdot\,; \alpha^\star) = \arg\min_{\widetilde{\sigma} \in \piecelin} \qty{ \int_{\mathcal{X}} \qty(\widetilde{\sigma}(f(\vb*{x})) - \sigma(f(\vb*{x})))^2 \dd \vb*{x} } ,$$
    with $\widetilde{\sigma}(z; \alpha^\star) := \projunit(\frac{1}{2}(1 + \frac{z}{\alpha^\star}))$, $\alpha^\star \approx 2.5996$.
}

\begin{proof}
We examine the optimisation problem
\begin{align}\tag{OPT}
&\arg\min_{\widetilde{\sigma} \in \piecelin} \qty{ \int_{\mathcal{X}} \qty(\widetilde{\sigma}(f(\vb*{x})) - \sigma(f(\vb*{x})))^2 \dd \vb*{x} } \\
\intertext{Using similar symmetry arguments to those in the proof of Proposition~\ref{prop:opt_approx_sigmoid}}
\text{(OPT)}  &= \arg\min_{\alpha > 0} \qty{ \int_{\mathcal{X}} \qty(\widetilde{\sigma}(f(\vb*{x}); \alpha) - \sigma(f(\vb*{x})))^2 \dd \vb*{x} } \\
&= \arg\min_{\alpha > 0} \qty{ \lim_{R \to \infty} \int_{B(0, R)} \qty(\widetilde{\sigma}(f(\vb*{x}); \alpha) - \sigma(f(\vb*{x})))^2 \dd \vb*{x} } \\
\intertext{where $B(0, R)$ is the unit ball on $\mathbb{R}^d$ and we have equality since $\lim_{R \to \infty} \bigcup_R B(0, R) = \mathbb{R}^d$ and $B(0, R) \subset B(0, R')$ for all $R < R'$. We change integration variables to $u = f(\vb*{x})$, giving}
\text{(OPT)}  &= \arg\min_{\alpha > 0} \qty{ \lim_{R \to \infty} \int^{\infty}_{-\infty}\qty(\widetilde{\sigma}(u; \alpha) - \sigma(u))^2 g(u, R)\, \dd u }, \label{eq:opt_proof_min_1}
\end{align}
where $g(u, R)$ is the quantity
\begin{equation}\label{eq:g_uR_defn}
g(u, R) = \int_{B(0, R) \cap \{\vb*{z} \mid f(\vb*{z}) = u \}} 1 \dd \vb*{x} .
\end{equation}
Now the integral in Eq.~\eqref{eq:g_uR_defn} is describing the volume of the intersection of a sphere in $d$ dimensions centered on the origin $B(0, R)$ with a $d$-dimensional hyperplane -- the set of all $\vb*{z}$ such that $\sum_i \beta_i z_i= \beta_0 - u$.
For $- R \leq \frac{\beta_0 - u}{\sqrt{\sum_{i=1}^d \beta_i^2}} \leq R$ this volume is nonzero and given by the formula 
\begin{equation}\label{eq:g_uR_formula}
g(u, R) = \frac{\pi^{\frac{d - 1}{2}}}{\Gamma(\frac{d - 1}{2} + 1)} \qty[ 
R^2 - \frac{(\beta_0 - u)^2}{\qty(\sum_{i=1}^d \beta_i^2)^2} ]^{\frac{d - 1}{2}} \sim \frac{\pi^{\frac{d - 1}{2}}}{\Gamma(\frac{d - 1}{2} + 1)} R^{d-1} := G(R),
\end{equation}
where $q(R) \sim p(R)$ if and only if $\lim_{R \to \infty} \frac{q(R)}{p(R)} = 1$.
Making the limits explicit in Eq.~\eqref{eq:opt_proof_min_1} we have 
\begin{align}
\text{(OPT)}  &= \arg\min_{\alpha > 0} \qty{ \lim_{R \to \infty} \lim_{a \to \infty}\int^{a}_{- a}\qty(\widetilde{\sigma}(u; \alpha) - \sigma(u))^2 g(u, R)\, \dd u } \\
&= \arg\min_{\alpha > 0} \qty{ \lim_{a \to \infty} \lim_{R \to \infty} 
 \int^{a}_{- a}\qty(\widetilde{\sigma}(u; \alpha) - \sigma(u))^2 g(u, R)\, \dd u } \\
 \intertext{where we can exchange the limits by the Moore-Osgood theorem -- the integral converges uniformly for fixed $R$ as $a \to \infty$ due to the integrand tails being $O(u^{d-1} \exp(-\abs{u}))$. We then write}
 \text{(OPT)}  &= \arg\min_{\alpha > 0} \qty{ \lim_{a \to \infty} 
 \int^{a}_{- a}\qty(\widetilde{\sigma}(u; \alpha) - \sigma(u))^2 \lim_{R \to \infty}  g(u, R)\, \dd u } \\
 \intertext{bringing the limit inside the integral as $g(u, R) \sim G(R)$ uniformly on $[-a, a]$ for fixed $a$. Allowing $R$ to grow we have}
    \text{(OPT)} &= \arg\min_{\alpha > 0} \qty{ \lim_{a \to \infty} 
 \int^{a}_{- a}\qty(\widetilde{\sigma}(u; \alpha) - \sigma(u))^2 \lim_{R \to \infty}  G(R)\, \dd u } \\
     &= \arg\min_{\alpha > 0} \qty{ \lim_{R \to \infty}  G(R) \cdot \lim_{a \to \infty} 
 \int^{a}_{- a}\qty(\widetilde{\sigma}(u; \alpha) - \sigma(u))^2 \, \dd u } \\
 &= \arg\min_{\alpha > 0} \qty{ \lim_{a \to \infty} 
 \int^{a}_{- a}\qty(\widetilde{\sigma}(u; \alpha) - \sigma(u))^2 \, \dd u }, \\
 \intertext{where we ignore factors dependent on $R$ as we are interested in the optimal argument $\alpha^\star$ and not the value of the minimum itself. Taking the limit as $a \to \infty$ yields}
  \text{(OPT)} &= \arg\min_{\alpha > 0} \qty{  
 \int^{\infty}_{- \infty}\qty(\widetilde{\sigma}(u; \alpha) - \sigma(u))^2 \, \dd u } \\
 &= \widetilde{\sigma}(\,\cdot\, ; \alpha^\star) \label{eq:opt_proof_final}
\end{align}
from Proposition~\ref{prop:opt_approx_sigmoid}.
Finally, from Lemma~\ref{lem:add_model_equiv} we have that the LAM induced from an LR model is $$\hat{y}_{\mathrm{LAM}}(\vb*{x}) = \projunit \qty(\frac{1}{2} +  \sum_{i=0}^{d} \frac{\beta_i}{2 \alpha^\star} x_i) = \widetilde{\sigma}\qty(\sum_{i=0}^{d} \beta_i x_i; \alpha^\star),$$
which upon comparison with Eq.~\eqref{eq:opt_proof_final} completes the proof.
\end{proof}

\section{Experiment Details}

\subsection{Problem Domain} 
We explicitly consider the consumer credit modelling domain, as this is a high-stakes application where interpretability is critical.
From the perspective of a machine learning model, a risk factor is effectively a specific grouping of features and their values for a particular input.
Such groupings are known in the academic literature as \emph{subscales}.
Monotone constraints and explicit subscale modelling limit the logistic additive models that we consider for comparison with their equivalent LAMs.
We formally describe these aspects below.

\paragraph{Monotone Constraints.} The set of monotone increasing variables is denoted as $\mathcal{I} \subseteq [d]$, monotone decreasing variables $\mathcal{D} \subseteq [d]$ and the remaining unconstrained variables $\mathcal{U} = [d] \setminus (\mathcal{I} \cup \mathcal{D})$.
Note that a variable can either be monotone increasing, decreasing, or neither, that is, $\mathcal{I}$, $\mathcal{D}$ and $\mathcal{U}$ form a partition of $[d]$.
Monotone constraints for a particular dataset are specified by the modeller using domain knowledge.
Categorical features are one-hot encoded and belong to $\mathcal{U}$.

\paragraph{Subscales.} 
Formally, the subscales partition the feature set: a dataset with $d$ input features has every subscale $S_i$ satisfying $S_i \subseteq [d]$, $S_i \cap S_j = \emptyset$ for all $i \neq j$ and $\bigcup_i S_i = [d]$.
The set of all subscales we denote by $\mathcal{S}$ and $\mathcal{S}$ is decided by the modeller a priori.

\subsection{Baseline Models}

We briefly describe a number of additive models, as defined in Definition~2.1, from the literature that will be used in the experimental comparison. 
These models will be used to test the linearisation procedure of Definition~2.1, namely LAMs will be constructed from the baseline additive models and their performance will be compared.

\subsubsection{Nonnegative Logistic Regression.}
We define \emph{Nonnegative Logistic Regression models (NNLR)} as those solving the following optimisation problem~\cite{CHEN2022113647}.
\begin{equation}
    \begin{aligned}
    \label{eq:nnlr_opt}
    \min_{\beta_i} \frac{1}{M}\sum_{j=1}^M L(\hat{y}(\vb*{x}^{(j)}), y^{(j)})) + C \sum_{i=1}^{d} \beta_i^2 \qq{s. t.} \beta_i \geq 0 \ \ \forall  i \in \mathcal{I};\ \ \beta_i \leq 0 \ \ \forall i \in \mathcal{D}.  
\end{aligned}
\end{equation}
where the logistic loss $L(\hat{y}, y) = -y \log \hat{y} - (1 - y ) \log(1 - \hat{y})$ and 
$\hat{y}(\vb*{x}) = \sigma( \beta_0 + \sum_{i=1}^d \beta_i x_i )$, with $C \geq 0$ being a regularisation hyperparameter.
We solve the optimisation problem~\eqref{eq:nnlr_opt} via Sequential Least Squares Programming (SLSQP)~\cite{Kraft1994,kraft1988software} as implemented in \texttt{scipy}~\cite{2020SciPy-NMeth}. 
NNLR models follow the familiar $\ell_2$-regularised (or `ridge') logistic regression~\cite{hastie2009elements}, with the added constraint on the model coefficients enforcing the monotone directions on the variables in $\mathcal{I}$ and $\mathcal{D}$.
The ``nonnegative'' label arises from the constraints all being positive in the Additive Risk Models of~\cite{CHEN2022113647}.

\subsubsection{Additive Risk Models (ARMs).} ARMs are additive models introduced in~\cite{CHEN2022113647} and come in two flavours, \emph{One-Layer ARMs (ARM1)} and \emph{Two-Layer ARMs (ARM2)}.
We begin with a brief description of ARM1, and refer the reader to~\cite{CHEN2022113647} for a more detailed description.
These models are chosen as they are the state-of-the-art interpretable GAMs that allow monotone constraints and in the case of ARM2, explicit subscale modelling.

The ARM1 model comprises a NNLR model with input features transformed like so: categorical features and special values are one-hot encoded.
For a monotone decreasing continuous feature $u \in \mathcal{D}$, a series of $L_u \in \mathbb{N}$ indicator variables are created in its stead, signifying membership of half intervals $(-\infty, \theta_j]$ with right-side boundaries $\theta_1 < \theta_2 < \cdots < \theta_{L_u} = +\infty$.
The corresponding NNLR model coefficients $\beta_{u; \theta_j}$ for $j \in [L_u]$ are all constrained to be nonnegative, i.e. $x_{u; \theta_j} \in \mathcal{I}$.
This feature processing scheme enforces that the ARM1 model output is monotone decreasing in $x_u$ -- in the notation of Definition~2.1, $f_u(x_u)$ is piecewise constant and monotone decreasing.
Monotone increasing features receive a similar treatment, with the indicator variables corresponding to the half-intervals $[\theta_j, +\infty)$ with \emph{left-side} boundaries $\theta_1 > \theta_2 > \cdots > \theta_{L_u} = \phi_u$, where $\phi_u$ lower bounds $x_u$.
Again, $\beta_{u; \theta_j} \in \mathcal{I}$, ensuring that $f_u(x_u)$ is piecewise constant and monotone \emph{increasing}.
For unconstrained variables $u \in \mathcal{U}$, the intervals $(\theta_j, \theta_{j+1}]$ are \emph{two-sided}, with bin edges $\phi_u = \theta_1 < \theta_2 < \cdots < \theta_{L_u} = + \infty$.
The bin edges $\theta_j$ for all continuous variables are decided using an entropy-based scheme~\cite{CHEN2022113647} that is essentially equivalent to training a CART decision tree separately on each feature $x_u$ with the fixed number of leaves $L_u$ pre-determined as a hyperparameter.

The ARM2 model is built out of ARM1 models trained on solely the variables for a particular subscale, with the output of the subscale $S$ model designated by $r^{[S]}(\vb*{x})$.
The output for each subscale model $r^{[S]}$ lies within $[0, 1]$ and is interpreted as the risk arising solely from $S$.
The second layer of ARM2, $r(\vb*{x}) = \sigma(\beta_0 + \sum_{S \in \mathcal{S}} \beta_{S} r^{[S]}(\vb*{x}))$, is an additional NNLR model with the subscale risk scores as input variables.
The input variables are fixed to be monotone increasing, that is, $\beta_S \geq 0$.
Note that the individual subscale ARM1 models $r^{[S]}$ are trained first and their outputs treated as fixed for the training of the global model $r(\vb*{x})$.



\subsection{Model training and hyperparameters}\label{sec:hyperparams}

For the ARM models on the HELOC and German Credit datasets, we replicate the hyperparameters (namely feature bin edges $\theta_j$) of~\cite{CHEN2022113647}. 
For the remaining datasets we restrict each continuous feature using feature binning to 5 bins and the \texttt{DecisionTreeClassifier} from scikit-learn~\cite{scikit-learn} is trained on each individual feature, with the splits being used to decide the bins.
NNLR-based models have regularisation parameter $C=0$ in all cases.
All XGB models use the following hyperparameters, with the remaining being the default parameters used in version 1.4.2~\cite{xgboost}.

\begin{lstlisting}[language=Python, caption=XGBoost hyperparameters]
xgb_base_params = {
    "max_depth": 2,
    "n_estimators": 50,
    "learning_rate": 0.1,
    "eval_metric": 'logloss', 
    "use_label_encoder": False,
    "missing": self.binariser_kwargs["special_value_threshold"]
}
\end{lstlisting}
The value \texttt{self.binariser\_kwargs[\textcolor{codepurple}{"special\_value\_threshold"}]} corresponds to $\phi_u$ in the main text and is decided for each dataset separately.

\subsection{Calibration}\label{sec:calibration}

Generally, a set of predictions of a binary outcome is well calibrated if the outcomes predicted to occur with probability $p$ occur about $p$ fraction of the time, for any probability $p \in [0, 1]$.
A common method for assessing the calibration of a binary classifier is the \emph{reliability diagram}~\cite{DeGroot1983,Niculescu_Mizil2005}.
The reliability diagram of a perfect classifier would be a line segment from $(0, 0)$ to $(1, 1)$.
In practise, the reliability diagram is often plotted alongside the idealised version and the difference is visually inspected, with the empirical frequencies computed with respect to some binning scheme over the model-predicted probabilities. 
We consider two widely-used numerical summary statistics for the calibration, \emph{Expected Calibration Error (ECE)} and \emph{Maximum Calibration Error, (MCE)}~\cite{DeGroot1983,Niculescu_Mizil2005}.
To compute these metrics, the test set predictions are sorted and partitioned into $K$ equally spaced bins over $[0, 1]$ ($K = 15$ in our experiments). 
We then have
$\operatorname{ECE} = \sum^K_{i=1} P(i) \cdot \abs{o_i - e_i}$ and $\operatorname{MCE} =\max_{i\in\{1, \ldots, K\}} \abs{o_i - e_i}$,
where $o_i$ is the true fraction of positive instances in bin $i$, $e_i$ is the arithmetic mean of the model outputs for the instances in bin $i$, and $P(i)$ is the empirical probability (fraction) of all instances that fall into bin $i$.
Lower values of ECE and MCE correspond to better calibration of a particular model, with the idealised model having a value of zero for both.

\subsection{Dataset preprocessing}\label{sec:preprocessing}

If we wish to distinguish metrics of interest of the $k=8$ different algorithms under consideration, then it is important for the number of statistically independent datasets, $N$, to be as large as possible so that the power of any statistical tests is maximal.
The work~\cite{Campelo2020} provides an algorithm for determining the appropriate $N$ for a desired statistical power.
We restrict our experiments to publically available datasets that are not synthetically generated, so we cannot keep generating datasets until the desired $N$ is reached.
Nonetheless, we are able to use $N=24$ independent datasets.
For the Poland dataset, there are 5 separate datasets provided, corresponding to consecutive years of data.
We consider these datasets separately, and denote them by $\text{Poland}\_n$ for $n\in\{ 0, \ldots, 4\}$.
The LC dataset is an order of magnitude larger than the others, thus we split the data into temporally contiguous (and disjoint) regions comprising 100000 datapoints each (apart from the last region).
This procedure leaves us with 13 datasets, which we denote by $\text{LC}\_n$ for $n\in\{ 0, \ldots, 12\}$.
The subscales $\mathcal{S}$ and monotone constraints $(\mathcal{I}, \mathcal{D}, \mathcal{U})$ for the remaining datasets were decided based on domain knowledge and are included in full in the appendix.

In terms of the subscale groupings $\mathcal{S}$, monotone constraints $(\mathcal{I}, \mathcal{D}, \mathcal{U})$ and feature lower bounds $\{\phi_u\}_{u \in [d]}$ for each dataset, for the HELOC and German datasets we replicate the assignments from~\cite{CHEN2022113647}.
In the code listings below we show all of the subscale groupings  $\mathcal{S}$ as a Python \texttt{OrderedDict} with name \texttt{<dataset>\_RC\_FEATURE\_MAPPING}, with keys corresponding to the names of the subscale, and values comprising a list of the constituent features.
Monotone constraints are stored in a Python \texttt{dict} called \texttt{<dataset>\_MONOTONE\_CONSTRAINTS}, with keys corresponding to feature names, and values in $\{-1, 1, 0\}$ corresponding to monotone decreasing, increasing and no constraint respectively.
\texttt{<dataset>\_SPECIAL\_VALUES\_DICT} contains special values for each feature that are subsequently one-hot encoded, and \texttt{<dataset>\_SPECIAL\_VALUE\_THRESHOLD} corresponds to a global feature lower bound $\phi_u$ for each dataset.
The \texttt{dict} called   \texttt{<dataset>\_MAX\_LEAF\_NODES\_DICT} corresponds to the number of allowed bins for each (continuous) feature.
The Poland dataset has uninformative attribute names, which we replace with informative names from~\cite{zikeba2016ensemble} using the Python \texttt{dict} called \texttt{POLAND\_BANKRUPTCY\_FEATURE\_MAPPING}.

\begin{lstlisting}[language=Python, caption=HELOC preprocessing]
HELOC_RC_FEATURE_MAPPING = OrderedDict({
    "ExternalRiskEstimate": ["ExternalRiskEstimate"],
    "TradeOpenTime": ["MSinceOldestTradeOpen", "MSinceMostRecentTradeOpen", "AverageMInFile"],
    "NumSatisfactoryTrades": ["NumSatisfactoryTrades"],
    "TradeFrequency": ["NumTrades60Ever2DerogPubRec", "NumTrades90Ever2DerogPubRec", "NumTotalTrades", "NumTradesOpeninLast12M"],
    "Delinquency": ["PercentTradesNeverDelq", "MSinceMostRecentDelq", "MaxDelq2PublicRecLast12M", "MaxDelqEver"],
    "Installment": ["PercentInstallTrades", "NetFractionInstallBurden", "NumInstallTradesWBalance"],
    "Inquiry": ["MSinceMostRecentInqexcl7days", "NumInqLast6M", "NumInqLast6Mexcl7days"],
    "RevolvingBalance": ["NetFractionRevolvingBurden", "NumRevolvingTradesWBalance"],
    "Utilization": ["NumBank2NatlTradesWHighUtilization"],
    "TradeWBalance": ["PercentTradesWBalance"]
})


HELOC_MONOTONE_CONSTRAINTS = {
    'ExternalRiskEstimate': -1,
    'MSinceOldestTradeOpen': -1,
    'MSinceMostRecentTradeOpen': -1,
    'AverageMInFile': -1,
    'NumSatisfactoryTrades': -1,
    'NumTrades60Ever2DerogPubRec': 1,
    'NumTrades90Ever2DerogPubRec': 1,
    'NumTotalTrades': 0,
    'NumTradesOpeninLast12M': +1,
    'PercentTradesNeverDelq': -1,
    'MSinceMostRecentDelq': -1,
    'MaxDelq2PublicRecLast12M': 0,
    'MaxDelqEver': 0,
    'PercentInstallTrades': 0,
    'NetFractionInstallBurden': +1,
    'NumInstallTradesWBalance': 0,
    'MSinceMostRecentInqexcl7days': -1,
    'NumInqLast6M': +1,
    'NumInqLast6Mexcl7days': +1,
    'NetFractionRevolvingBurden': +1,
    'NumRevolvingTradesWBalance': 0,
    'NumBank2NatlTradesWHighUtilization': +1,
    'PercentTradesWBalance': 0 
}

HELOC_CATEGORICAL_COLS = ["MaxDelq2PublicRecLast12M", "MaxDelqEver"]

HELOC_MAX_LEAF_NODES_DICT = {
    'ExternalRiskEstimate': 5,
    'MSinceOldestTradeOpen': 4,
    'MSinceMostRecentTradeOpen': 2,
    'AverageMInFile': 4,
    'NumSatisfactoryTrades': 5,
    'NumTrades60Ever2DerogPubRec': 5,
    'NumTrades90Ever2DerogPubRec': 4,
    'NumTotalTrades': 5,
    'NumTradesOpeninLast12M': 5,
    'PercentTradesNeverDelq': 5,
    'MSinceMostRecentDelq': 4,
    'MaxDelq2PublicRecLast12M': 3,
    'MaxDelqEver': 2,
    'PercentInstallTrades': 5,
    'NetFractionInstallBurden': 3,
    'NumInstallTradesWBalance': 5,
    'MSinceMostRecentInqexcl7days': 5,
    'NumInqLast6M': 4,
    'NumInqLast6Mexcl7days': 2,
    'NetFractionRevolvingBurden': 4,
    'NumRevolvingTradesWBalance': 5,
    'NumBank2NatlTradesWHighUtilization': 5,
    'PercentTradesWBalance': 5
}

HELOC_SPECIAL_VALUES_DICT = { feature: [-7, -8, -9] 
                              for feature in HELOC_MAX_LEAF_NODES_DICT.keys()
                            } 

HELOC_SPECIAL_VALUE_THRESHOLD = -0.5
\end{lstlisting}

\begin{lstlisting}[language=Python, caption=German preprocessing]
GERMAN_CREDIT_HEADERS =["Status of existing checking account","Duration in month","Credit history",\
         "Purpose","Credit amount","Savings account/bonds","Present employment since",\
         "Installment rate in percentage of disposable income","Personal status and sex",\
         "Other debtors / guarantors","Present residence since","Property","Age in years",\
        "Other installment plans","Housing","Number of existing credits at this bank",\
        "Job","Number of people being liable to provide maintenance for","Telephone","foreign worker", "Target"]

GERMAN_CREDIT_FEATURES = copy(GERMAN_CREDIT_HEADERS); GERMAN_CREDIT_FEATURES.remove("Target")

GERMAN_CREDIT_CATEGORICAL_COLS = [
                                  "Credit history",
                                  "Purpose",
                                  "Present employment since",
                                  "Personal status and sex",
                                  "Other debtors / guarantors",
                                  "Property",
                                  "Other installment plans",
                                  "Housing",
                                  "Job",
                                  "Telephone",
                                  "foreign worker"
                                 ]

GERMAN_CREDIT_NON_CATEGORICAL_COLS = list(set(GERMAN_CREDIT_FEATURES) - set(GERMAN_CREDIT_CATEGORICAL_COLS))

GERMAN_CREDIT_RC_FEATURE_MAPPING = OrderedDict({
    "CreditLoanInfo": [
        "Status of existing checking account", 
        "Credit history",
        "Purpose",
        "Savings account/bonds"
    ],
    "PersonalInfo": [
        "Present employment since",
        "Personal status and sex",
        "Other debtors / guarantors",
        "Property",
        "Other installment plans",
        "Housing",
        "Job",
        "Telephone",
        "foreign worker"
    ]
})

GERMAN_CREDIT_MONOTONE_CONSTRAINTS = {feature: 0 for feature in GERMAN_CREDIT_FEATURES}
GERMAN_CREDIT_MONOTONE_CONSTRAINTS["Status of existing checking account"] = 1
GERMAN_CREDIT_MONOTONE_CONSTRAINTS["Savings account/bonds"] = 1

GERMAN_CREDIT_MAX_LEAF_NODES_DICT = {feature: 5 for feature in GERMAN_CREDIT_NON_CATEGORICAL_COLS}

GERMAN_CREDIT_SPECIAL_VALUE_THRESHOLD = -1000

GERMAN_CREDIT_SPECIAL_VALUES_DICT = { feature: [GERMAN_CREDIT_SPECIAL_VALUE_THRESHOLD] 
                              for feature in GERMAN_CREDIT_NON_CATEGORICAL_COLS
                            } 
\end{lstlisting}

\begin{lstlisting}[language=Python, caption=Taiwan preprocessing]
TAIWAN_CREDIT_FEATURES = ['LIMIT_BAL', 'SEX', 'EDUCATION', 'MARRIAGE', 'AGE', 'PAY_0', 'PAY_2',
       'PAY_3', 'PAY_4', 'PAY_5', 'PAY_6', 'BILL_AMT1', 'BILL_AMT2',
       'BILL_AMT3', 'BILL_AMT4', 'BILL_AMT5', 'BILL_AMT6', 'PAY_AMT1',
       'PAY_AMT2', 'PAY_AMT3', 'PAY_AMT4', 'PAY_AMT5', 'PAY_AMT6']

TAIWAN_CREDIT_CATEGORICAL_COLS = [ 
    'SEX', 'EDUCATION', 'MARRIAGE',
    'PAY_0', 'PAY_2', 'PAY_3', 'PAY_4', 'PAY_5', 'PAY_6'
]

TAIWAN_CREDIT_NON_CATEGORICAL_COLS = list(set(TAIWAN_CREDIT_FEATURES) - set(TAIWAN_CREDIT_CATEGORICAL_COLS))

TAIWAN_CREDIT_RC_FEATURE_MAPPING = OrderedDict({
    "CreditLoanInfo": [
        "LIMIT_BAL"
    ],
    "PersonalInfo": [
        'SEX', 'EDUCATION', 'MARRIAGE', 'AGE'
    ],
    "RepaymentStatus": [
        'PAY_0', 'PAY_2', 'PAY_3', 'PAY_4', 'PAY_5', 'PAY_6'
    ],
    "BillAmount": [
        'BILL_AMT1', 'BILL_AMT2', 'BILL_AMT3', 'BILL_AMT4', 'BILL_AMT5', 'BILL_AMT6'
    ],
    "PaymentAmounts": [
        'PAY_AMT1', 'PAY_AMT2', 'PAY_AMT3', 'PAY_AMT4', 'PAY_AMT5', 'PAY_AMT6'
    ]
})

TAIWAN_CREDIT_MONOTONE_CONSTRAINTS = {feature: 0 for feature in TAIWAN_CREDIT_FEATURES}
TAIWAN_CREDIT_MONOTONE_CONSTRAINTS['PAY_0'] = 1
for n in range(1, 7):
    TAIWAN_CREDIT_MONOTONE_CONSTRAINTS[f'PAY_AMT{n}'] = -1

TAIWAN_CREDIT_MAX_LEAF_NODES_DICT = {feature: 5 for feature in TAIWAN_CREDIT_NON_CATEGORICAL_COLS}

TAIWAN_CREDIT_SPECIAL_VALUE_THRESHOLD = -400000

TAIWAN_CREDIT_SPECIAL_VALUES_DICT = { feature: [TAIWAN_CREDIT_SPECIAL_VALUE_THRESHOLD] 
                              for feature in TAIWAN_CREDIT_NON_CATEGORICAL_COLS
                            } 
\end{lstlisting}

\begin{lstlisting}[language=Python, caption=Give Me Some Credit preprocessing]
GIVE_ME_SOME_CREDIT_FEATURES = [
    'RevolvingUtilizationOfUnsecuredLines', 
    'age',
    'NumberOfTime30-59DaysPastDueNotWorse',
    'DebtRatio',
    'MonthlyIncome',
    'NumberOfOpenCreditLinesAndLoans',
    'NumberOfTimes90DaysLate',
    'NumberRealEstateLoansOrLines',
    'NumberOfTime60-89DaysPastDueNotWorse',
    'NumberOfDependents'
]

GIVE_ME_SOME_CREDIT_CATEGORICAL_COLS = []

GIVE_ME_SOME_CREDIT_SPECIAL_VALUES_DICT = {feature: [-1] for feature in GIVE_ME_SOME_CREDIT_FEATURES}

GIVE_ME_SOME_CREDIT_MAX_LEAF_NODES_DICT = {feature: 5 for feature in GIVE_ME_SOME_CREDIT_FEATURES}

GIVE_ME_SOME_CREDIT_SPECIAL_VALUE_THRESHOLD = -0.5

GIVE_ME_SOME_CREDIT_MONOTONE_CONSTRAINTS = {feature: 0 for feature in GIVE_ME_SOME_CREDIT_FEATURES}
for feature in ['RevolvingUtilizationOfUnsecuredLines', 'NumberOfTime30-59DaysPastDueNotWorse', 'NumberOfTime60-89DaysPastDueNotWorse', 'NumberOfTimes90DaysLate']:
    GIVE_ME_SOME_CREDIT_MONOTONE_CONSTRAINTS[feature] = 1
    
for feature in ['MonthlyIncome']:
    GIVE_ME_SOME_CREDIT_MONOTONE_CONSTRAINTS[feature] = -1

GIVE_ME_SOME_CREDIT_REASON_CODE_MAPPING = OrderedDict({
    "HistoricalLatePayments": [
        'NumberOfTime30-59DaysPastDueNotWorse', 'NumberOfTime60-89DaysPastDueNotWorse', 'NumberOfTimes90DaysLate'
    ],
    "FinancialObligations": [
        'RevolvingUtilizationOfUnsecuredLines', 'NumberOfOpenCreditLinesAndLoans', 'NumberRealEstateLoansOrLines', 'NumberOfDependents'
    ],
    "FinancialCapabilities": [
        'MonthlyIncome'
    ],
    "Demographics": [
        "age"
    ]
})
\end{lstlisting}

\begin{lstlisting}[language=Python, caption=Japan preprocessing]
# From https://www.rpubs.com/kuhnrl30/CreditScreen
JAPAN_CREDIT_PUTATIVE_FEATURE_NAMES = [
    "Male",
    "Age",
    "Debt",
    "Married",
    "BankCustomer",
    "EducationLevel",
    "Ethnicity",
    "YearsEmployed",
    "PriorDefault",
    "Employed",
    "CreditScore",
    "DriversLicense",
    "Citizen",
    "ZipCode",
    "Income"
]

JAPAN_CREDIT_NUMERIC_COLS = ["Age", "Debt", "YearsEmployed", "CreditScore", "Income"]
JAPAN_CREDIT_CATEGORICAL_COLS = list(set(JAPAN_CREDIT_PUTATIVE_FEATURE_NAMES) - set(JAPAN_CREDIT_NUMERIC_COLS))

JAPAN_CREDIT_RC_FEATURE_MAPPING = OrderedDict({
    "Demographic": ["Male", "Age", "Married", "Ethnicity", "ZipCode", "Citizen"],
    "Career": ["EducationLevel", "YearsEmployed", "Employed", "DriversLicense"],
    "FinancialObligations": ["Debt", "PriorDefault"],
    "FinancialCapabilities": ["CreditScore", "BankCustomer", "Income"]
    
})

JAPAN_CREDIT_MONOTONE_CONSTRAINTS = {feature: 0 for feature in JAPAN_CREDIT_PUTATIVE_FEATURE_NAMES}
for feature in ['Debt']:
    JAPAN_CREDIT_MONOTONE_CONSTRAINTS[feature] = 1
for feature in ['Income', 'CreditScore']:
    JAPAN_CREDIT_MONOTONE_CONSTRAINTS[feature] = -1

JAPAN_CREDIT_MAX_LEAF_NODES_DICT = {feature: 5 for feature in JAPAN_CREDIT_NUMERIC_COLS}

JAPAN_CREDIT_SPECIAL_VALUES_DICT = {feature: [-1] for feature in JAPAN_CREDIT_NUMERIC_COLS}

JAPAN_CREDIT_SPECIAL_VALUE_THRESHOLD = -0.5
\end{lstlisting}

\begin{lstlisting}[language=Python, caption=Australia preprocessing]
# From https://www.rpubs.com/kuhnrl30/CreditScreen
AUSTRALIA_CREDIT_PUTATIVE_FEATURE_NAMES = [
    "Male",
    "Age",
    "Debt",
    "BankCustomer",
    "EducationLevel",
    "Ethnicity",
    "YearsEmployed",
    "PriorDefault",
    "Employed",
    "CreditScore",
    "DriversLicense",
    "Citizen",
    "ZipCode",
    "Income"
]

AUSTRALIA_CREDIT_NUMERIC_COLS = ["Age", "Debt", "YearsEmployed", "CreditScore", "Income"]
AUSTRALIA_CREDIT_CATEGORICAL_COLS = list(set(AUSTRALIA_CREDIT_PUTATIVE_FEATURE_NAMES) - set(AUSTRALIA_CREDIT_NUMERIC_COLS))

AUSTRALIA_CREDIT_RC_FEATURE_MAPPING = OrderedDict({
    "Demographic": ["Male", "Age", "Ethnicity", "ZipCode", "Citizen"],
    "Career": ["EducationLevel", "YearsEmployed", "Employed", "DriversLicense"],
    "FinancialObligations": ["Debt", "PriorDefault"],
    "FinancialCapabilities": ["CreditScore", "BankCustomer", "Income"]
    
})

AUSTRALIA_CREDIT_MONOTONE_CONSTRAINTS = {feature: 0 for feature in AUSTRALIA_CREDIT_PUTATIVE_FEATURE_NAMES}
for feature in ['Debt']:
    AUSTRALIA_CREDIT_MONOTONE_CONSTRAINTS[feature] = 1
for feature in ['Income', 'CreditScore']:
    AUSTRALIA_CREDIT_MONOTONE_CONSTRAINTS[feature] = -1

AUSTRALIA_CREDIT_MAX_LEAF_NODES_DICT = {feature: 5 for feature in AUSTRALIA_CREDIT_NUMERIC_COLS}

AUSTRALIA_CREDIT_SPECIAL_VALUES_DICT = {feature: [-1] for feature in AUSTRALIA_CREDIT_NUMERIC_COLS}

AUSTRALIA_CREDIT_SPECIAL_VALUE_THRESHOLD = -0.5

\end{lstlisting}

\begin{lstlisting}[language=Python, caption=Poland preprocessing]
POLAND_BANKRUPTCY_FEATURE_MAPPING = {
    "Attr1" : "net profit / total assets",
    "Attr2" : "total liabilities / total assets",
    "Attr3" : "working capital / total assets",
    "Attr4" : "current assets / short-term liabilities",
    "Attr5" : "((cash + short-term securities + receivables - short-term liabilities) / (operating expenses - depreciation)) * 365",
    "Attr6" : "retained earnings / total assets",
    "Attr7" : "EBIT / total assets",
    "Attr8" : "book value of equity / total liabilities",
    "Attr9" : "sales / total assets",
    "Attr10" : "equity / total assets",
    "Attr11" : "(gross profit + extraordinary items + financial expenses) / total assets",
    "Attr12" : "gross profit / short-term liabilities",
    "Attr13" : "(gross profit + depreciation) / sales",
    "Attr14" : "(gross profit + interest) / total assets",
    "Attr15" : "(total liabilities * 365) / (gross profit + depreciation)",
    "Attr16" : "(gross profit + depreciation) / total liabilities",
    "Attr17" : "total assets / total liabilities",
    "Attr18" : "gross profit / total assets",
    "Attr19" : "gross profit / sales",
    "Attr20" : "(inventory * 365) / sales",
    "Attr21" : "sales (n) / sales (n-1)",
    "Attr22" : "profit on operating activities / total assets",
    "Attr23" : "net profit / sales",
    "Attr24" : "gross profit (in 3 years) / total assets",
    "Attr25" : "(equity - share capital) / total assets",
    "Attr26" : "(net profit + depreciation) / total liabilities",
    "Attr27" : "profit on operating activities / financial expenses",
    "Attr28" : "working capital / fixed assets",
    "Attr29" : "logarithm of total assets",
    "Attr30" : "(total liabilities - cash) / sales",
    "Attr31" : "(gross profit + interest) / sales",
    "Attr32" : "(current liabilities * 365) / cost of products sold",
    "Attr33" : "operating expenses / short-term liabilities",
    "Attr34" : "operating expenses / total liabilities",
    "Attr35" : "profit on sales / total assets",
    "Attr36" : "total sales / total assets",
    "Attr37" : "(current assets - inventories) / long-term liabilities",
    "Attr38" : "constant capital / total assets",
    "Attr39" : "profit on sales / sales",
    "Attr40" : "(current assets - inventory - receivables) / short-term liabilities",
    "Attr41" : "total liabilities / ((profit on operating activities + depreciation) * (12/365))",
    "Attr42" : "profit on operating activities / sales",
    "Attr43" : "rotation receivables + inventory turnover in days",
    "Attr44" : "(receivables * 365) / sales",
    "Attr45" : "net profit / inventory",
    "Attr46" : "(current assets - inventory) / short-term liabilities",
    "Attr47" : "(inventory * 365) / cost of products sold",
    "Attr48" : "EBITDA (profit on operating activities - depreciation) / total assets",
    "Attr49" : "EBITDA (profit on operating activities - depreciation) / sales",
    "Attr50" : "current assets / total liabilities",
    "Attr51" : "short-term liabilities / total assets",
    "Attr52" : "(short-term liabilities * 365) / cost of products sold)",
    "Attr53" : "equity / fixed assets",
    "Attr54" : "constant capital / fixed assets",
    "Attr55" : "working capital",
    "Attr56" : "(sales - cost of products sold) / sales",
    "Attr57" : "(current assets - inventory - short-term liabilities) / (sales - gross profit - depreciation)",
    "Attr58" : "total costs /total sales",
    "Attr59" : "long-term liabilities / equity",
    "Attr60" : "sales / inventory",
    "Attr61" : "sales / receivables",
    "Attr62" : "(short-term liabilities *365) / sales",
    "Attr63" : "sales / short-term liabilities",
    "Attr64" : "sales / fixed assets"
}

POLAND_BANKRUPTCY_CATEGORICAL_COLS = []
POLAND_BANKRUPTCY_NUMERIC_COLS = [
    'net profit / total assets', 'total liabilities / total assets', 'working capital / total assets', 
    'current assets / short-term liabilities', '((cash + short-term securities + receivables - short-term liabilities) / (operating expenses - depreciation)) * 365',
    'retained earnings / total assets', 'EBIT / total assets', 'book value of equity / total liabilities',
    'sales / total assets', 'equity / total assets', '(gross profit + extraordinary items + financial expenses) / total assets',
    'gross profit / short-term liabilities', '(gross profit + depreciation) / sales', '(gross profit + interest) / total assets',
    '(total liabilities * 365) / (gross profit + depreciation)', '(gross profit + depreciation) / total liabilities',
    'total assets / total liabilities', 'gross profit / total assets', 'gross profit / sales',
    '(inventory * 365) / sales', 'sales (n) / sales (n-1)', 'profit on operating activities / total assets',
    'net profit / sales', 'gross profit (in 3 years) / total assets', '(equity - share capital) / total assets', '(net profit + depreciation) / total liabilities',
    'profit on operating activities / financial expenses', 'working capital / fixed assets', 'logarithm of total assets',
    '(total liabilities - cash) / sales', '(gross profit + interest) / sales', '(current liabilities * 365) / cost of products sold',
    'operating expenses / short-term liabilities', 'operating expenses / total liabilities', 'profit on sales / total assets',
    'total sales / total assets', 'constant capital / total assets', 'profit on sales / sales',
    '(current assets - inventory - receivables) / short-term liabilities', 'total liabilities / ((profit on operating activities + depreciation) * (12/365))',
    'profit on operating activities / sales', 'rotation receivables + inventory turnover in days', '(receivables * 365) / sales',
    'net profit / inventory', '(current assets - inventory) / short-term liabilities', '(inventory * 365) / cost of products sold',
    'EBITDA (profit on operating activities - depreciation) / total assets', 'EBITDA (profit on operating activities - depreciation) / sales',
    'current assets / total liabilities', 'short-term liabilities / total assets', '(short-term liabilities * 365) / cost of products sold)',
    'equity / fixed assets', 'constant capital / fixed assets', 'working capital', '(sales - cost of products sold) / sales',
    '(current assets - inventory - short-term liabilities) / (sales - gross profit - depreciation)', 'total costs /total sales',
    'long-term liabilities / equity', 'sales / inventory', 'sales / receivables', '(short-term liabilities *365) / sales',
    'sales / short-term liabilities', 'sales / fixed assets'
]

POLAND_BANKRUPTCY_MONOTONE_CONSTRAINTS = {
    'net profit / total assets': -1,
    'total liabilities / total assets': 1,
    'working capital / total assets': -1,
    'current assets / short-term liabilities': -1,
    '((cash + short-term securities + receivables - short-term liabilities) / (operating expenses - depreciation)) * 365': -1,
    'retained earnings / total assets': -1,
    'EBIT / total assets': -1,
    'book value of equity / total liabilities': -1,
    'sales / total assets': 0,
    'equity / total assets': -1,
    '(gross profit + extraordinary items + financial expenses) / total assets': -1,
    'gross profit / short-term liabilities': -1,
    '(gross profit + depreciation) / sales': -1,
    '(gross profit + interest) / total assets': -1,
    '(total liabilities * 365) / (gross profit + depreciation)': 0,
    '(gross profit + depreciation) / total liabilities': -1,
    'total assets / total liabilities': -1,
    'gross profit / total assets': -1,
    'gross profit / sales': -1,
    '(inventory * 365) / sales': 0,
    'sales (n) / sales (n-1)': 0,
    'profit on operating activities / total assets': 0,
    'net profit / sales': -1,
    'gross profit (in 3 years) / total assets': -1,
    '(equity - share capital) / total assets': -1,
    '(net profit + depreciation) / total liabilities': -1,
    'profit on operating activities / financial expenses': 0,
    'working capital / fixed assets': 0,
    'logarithm of total assets': 0,
    '(total liabilities - cash) / sales': 1,
    '(gross profit + interest) / sales': -1,
    '(current liabilities * 365) / cost of products sold': 0,
    'operating expenses / short-term liabilities': 0,
    'operating expenses / total liabilities': 0,
    'profit on sales / total assets': 0,
    'total sales / total assets': 0,
    'constant capital / total assets': -1,
    'profit on sales / sales': 0,
    '(current assets - inventory - receivables) / short-term liabilities': -1,
    'total liabilities / ((profit on operating activities + depreciation) * (12/365))': 0,
    'profit on operating activities / sales': -1,
    'rotation receivables + inventory turnover in days': 0,
    '(receivables * 365) / sales': 0,
    'net profit / inventory': -1,
    '(current assets - inventory) / short-term liabilities': -1,
    '(inventory * 365) / cost of products sold': 0,
    'EBITDA (profit on operating activities - depreciation) / total assets': 0,
    'EBITDA (profit on operating activities - depreciation) / sales': 0,
    'current assets / total liabilities': -1,
    'short-term liabilities / total assets': 1,
    '(short-term liabilities * 365) / cost of products sold)': 0,
    'equity / fixed assets': 0,
    'constant capital / fixed assets': 0,
    'working capital': -1,
    '(sales - cost of products sold) / sales': 0,
    '(current assets - inventory - short-term liabilities) / (sales - gross profit - depreciation)': 0,
    'total costs /total sales': 0,
    'long-term liabilities / equity': 0,
    'sales / inventory': 0,
    'sales / receivables': 0,
    '(short-term liabilities *365) / sales': 1,
    'sales / short-term liabilities': -1,
    'sales / fixed assets': 0
}

POLAND_BANKRUPTCY_MAX_LEAF_NODES_DICT = {feature: 5 for feature in POLAND_BANKRUPTCY_NUMERIC_COLS}

POLAND_BANKRUPTCY_SPECIAL_VALUES_DICT = {feature: [-6] for feature in POLAND_BANKRUPTCY_NUMERIC_COLS}

POLAND_BANKRUPTCY_SPECIAL_VALUE_THRESHOLD = -5.5

POLAND_BANKRUPTCY_RC_FEATURE_MAPPING = OrderedDict({
    "Financing": [
        "(equity - share capital) / total assets"
    ],
    "CurrentRatio": [
        "total liabilities / ((profit on operating activities + depreciation) * (12/365))"
    ],
    "WorkingCapital":[
        "working capital / total assets", "working capital / fixed assets", "constant capital / total assets", "working capital",
        "constant capital / fixed assets", "logarithm of total assets"
    ],
    "LiabilitiesTurnoverRatios": [
        "equity / fixed assets", "book value of equity / total liabilities", "equity / total assets", "(net profit + depreciation) / total liabilities",
        "sales / short-term liabilities", "(current liabilities * 365) / cost of products sold", "operating expenses / short-term liabilities",
        "(short-term liabilities * 365) / cost of products sold)", "short-term liabilities / total assets", "(current assets - inventory) / short-term liabilities",
        "(current assets - inventory - receivables) / short-term liabilities", "operating expenses / total liabilities"
    ],
    "ProfitabilityRatios": [
        "(gross profit + depreciation) / sales", "profit on operating activities / total assets", "(gross profit + interest) / sales",
        "rotation receivables + inventory turnover in days", "net profit / total assets",
        "(gross profit + extraordinary items + financial expenses) / total assets", "(gross profit + interest) / total assets",
        "gross profit / total assets", "gross profit (in 3 years) / total assets"
        
    ],
    "LeverageRatios": [
        "(total liabilities * 365) / (gross profit + depreciation)", "total liabilities / total assets", "current assets / short-term liabilities",
        "total assets / total liabilities", "gross profit / short-term liabilities", "(gross profit + depreciation) / total liabilities",
        "long-term liabilities / equity",
    ],
    "TurnoverRatios": [
        "(inventory * 365) / sales", "sales (n) / sales (n-1)", "(receivables * 365) / sales", "net profit / inventory",
        "(current assets - inventory) / short-term liabilities", "(inventory * 365) / cost of products sold",
        "current assets / total liabilities",
    ],
    "OperatingPerformanceRatios": [
        "sales / total assets", "total sales / total assets", "EBITDA (profit on operating activities - depreciation) / sales",
        "((cash + short-term securities + receivables - short-term liabilities) / (operating expenses - depreciation)) * 365",
        "retained earnings / total assets", "EBIT / total assets", "EBITDA (profit on operating activities - depreciation) / total assets",
        "(current assets - inventory - short-term liabilities) / (sales - gross profit - depreciation)", "profit on operating activities / financial expenses"
    ],
    "SalesInventoryRatios": [
        "(sales - cost of products sold) / sales",
        "total costs /total sales", "sales / inventory", "sales / receivables", "net profit / inventory"
    ],
    "SalesLiabilityRatios": [
        "(short-term liabilities *365) / sales", "(total liabilities - cash) / sales"
    ],
    "ProfitabilitySalesRatios": [
        "gross profit / sales", "net profit / sales", "profit on sales / sales", "profit on operating activities / sales"
    ],
    "SalesCapitalRatios": [
        "profit on sales / total assets", "sales / fixed assets"
    ]
})

\end{lstlisting}

\begin{lstlisting}[language=Python, caption=Lending Club preprocessing]
LENDING_CLUB_SPECIAL_VALUE_THRESHOLD = -0.5

LENDING_CLUB_CATEGORICAL_COLS = [
    'term', 'emp_length', 'home_ownership', 'verification_status',
    'pymnt_plan', 'purpose', 'initial_list_status', 'application_type',
    'hardship_flag', 'disbursement_method', 'debt_settlement_flag'
]

LENDING_CLUB_NON_CATEGORICAL_COLS = [
    'loan_amnt', 'funded_amnt', 'funded_amnt_inv', 'int_rate', 'installment', 'sub_grade',
    'annual_inc', 'issue_d', 'dti', 'delinq_2yrs', 'fico_range_low', 'fico_range_high',
    'inq_last_6mths', 'mths_since_last_delinq', 'mths_since_last_record', 'open_acc', 'pub_rec', 
    'revol_bal', 'revol_util', 'total_acc', 'out_prncp', 'out_prncp_inv',
    'total_pymnt', 'total_pymnt_inv', 'total_rec_prncp', 'total_rec_int', 'total_rec_late_fee', 
    'recoveries', 'collection_recovery_fee', 'last_pymnt_amnt', 'last_fico_range_high', 
    'last_fico_range_low', 'collections_12_mths_ex_med', 'mths_since_last_major_derog', 'acc_now_delinq',
    'tot_coll_amt', 'tot_cur_bal', 'open_acc_6m', 'open_act_il',
    'open_il_12m', 'open_il_24m', 'mths_since_rcnt_il', 'total_bal_il', 'il_util',
    'open_rv_12m', 'open_rv_24m', 'max_bal_bc', 'all_util',
    'total_rev_hi_lim', 'inq_fi', 'total_cu_tl', 'inq_last_12m',
    'acc_open_past_24mths', 'avg_cur_bal', 'bc_open_to_buy', 'bc_util', 'chargeoff_within_12_mths',
    'delinq_amnt', 'mo_sin_old_il_acct', 'mo_sin_old_rev_tl_op', 'mo_sin_rcnt_rev_tl_op',
    'mo_sin_rcnt_tl', 'mort_acc', 'mths_since_recent_bc', 'mths_since_recent_bc_dlq',
    'mths_since_recent_inq', 'mths_since_recent_revol_delinq', 'num_accts_ever_120_pd', 'num_actv_bc_tl',
    'num_actv_rev_tl', 'num_bc_sats', 'num_bc_tl', 'num_il_tl',
    'num_op_rev_tl', 'num_rev_accts', 'num_rev_tl_bal_gt_0', 'num_sats',
    'num_tl_120dpd_2m', 'num_tl_30dpd', 'num_tl_90g_dpd_24m', 'num_tl_op_past_12m',
    'pct_tl_nvr_dlq', 'percent_bc_gt_75', 'pub_rec_bankruptcies', 'tax_liens',
    'tot_hi_cred_lim', 'total_bal_ex_mort', 'total_bc_limit', 'total_il_high_credit_limit'
]

LENDING_CLUB_MONOTONE_CONSTRAINTS = OrderedDict({
    'issue_d': 1,
    'tot_coll_amt': 0,
    'num_bc_sats': 1,
    'total_rev_hi_lim': -1,
    'last_fico_range_low': -1,
    'num_rev_tl_bal_gt_0': 1,
    'mths_since_recent_bc': -1,
    'revol_bal': 0,
    'mo_sin_rcnt_rev_tl_op': -1,
    'out_prncp_inv': 1,
    'total_bal_ex_mort': 0,
    'mths_since_last_delinq': 0,
    'recoveries': 0,
    'chargeoff_within_12_mths': 1,
    'fico_range_high': -1,
    'total_pymnt': -1,
    'mths_since_rcnt_il': 0,
    'open_act_il': 0,
    'bc_util': 1,
    'revol_util': 1,
    'open_il_24m': 1,
    'total_pymnt_inv': -1,
    'il_util': 1,
    'mths_since_recent_bc_dlq': 0,
    'num_rev_accts': 0,
    'pub_rec': 0,
    'num_sats': 1,
    'num_il_tl': 0,
    'out_prncp': 1,
    'all_util': 1,
    'num_tl_30dpd': 1,
    'collections_12_mths_ex_med': 1,
    'total_acc': 0,
    'num_actv_bc_tl': 1,
    'delinq_amnt': 1,
    'num_tl_op_past_12m': 1,
    'mths_since_last_major_derog': 0,
    'tot_cur_bal': 0,
    'total_rec_prncp': 0,
    'inq_last_12m': 1,
    'inq_fi': 1,
    'dti': 1,
    'num_bc_tl': 0,
    'total_rec_int': 0,
    'tot_hi_cred_lim': 0,
    'avg_cur_bal': 0,
    'funded_amnt_inv': 0,
    'num_tl_120dpd_2m': 1,
    'open_acc_6m': 1,
    'funded_amnt': 0,
    'tax_liens': 0,
    'open_rv_24m': 1,
    'percent_bc_gt_75': 0,
    'mo_sin_old_rev_tl_op': 0,
    'num_op_rev_tl': 1,
    'int_rate': 1,
    'total_cu_tl': 0,
    'pct_tl_nvr_dlq': 0,
    'num_accts_ever_120_pd': 0,
    'sub_grade': 1,
    'total_il_high_credit_limit': 0,
    'mo_sin_old_il_acct': 0,
    'mths_since_recent_revol_delinq': 0,
    'open_rv_12m': 1,
    'acc_now_delinq': 1,
    'last_fico_range_high': -1,
    'mths_since_recent_inq': -1,
    'mort_acc': -1,
    'total_bal_il': 0,
    'inq_last_6mths': 1,
    'last_pymnt_amnt': 0,
    'max_bal_bc': 0,
    'collection_recovery_fee': 0,
    'num_actv_rev_tl': 1,
    'open_il_12m': 0,
    'delinq_2yrs': 1,
    'mo_sin_rcnt_tl': -1,
    'bc_open_to_buy': -1,
    'loan_amnt': 0,
    'mths_since_last_record': 0,
    'installment': 0,
    'fico_range_low': -1,
    'total_rec_late_fee': 0,
    'open_acc': 1,
    'acc_open_past_24mths': 1,
    'annual_inc': -1,
    'num_tl_90g_dpd_24m': 0,
    'total_bc_limit': -1,
    'pub_rec_bankruptcies': 1
})

LENDING_CLUB_MONOTONE_CONSTRAINTS.update({col: 0 for col in LENDING_CLUB_CATEGORICAL_COLS})


LENDING_CLUB_SPECIAL_VALUES_DICT = { feature: [LENDING_CLUB_SPECIAL_VALUE_THRESHOLD] 
                              for feature in LENDING_CLUB_NON_CATEGORICAL_COLS
                            } 

LENDING_CLUB_RC_FEATURE_MAPPING = OrderedDict({
    "LoanInfo": [
        "loan_amnt", "funded_amnt", "funded_amnt_inv", "term", "sub_grade", "issue_d", 
        "pymnt_plan", "purpose", "initial_list_status", "application_type", "disbursement_method"
    ],
    "LoanStatus": [
        "out_prncp", "out_prncp_inv", "total_pymnt", "total_pymnt_inv", 
        "total_rec_prncp", "total_rec_int", "total_rec_late_fee", "recoveries", 
        "collection_recovery_fee", "last_pymnt_amnt"
    ],
    "PersonalInfo": [
        "emp_length", "home_ownership"
    ],
    "FinancialCapabilities": [
        "annual_inc", "verification_status", "open_acc", "total_acc", "tot_cur_bal",
        "bc_util", "bc_open_to_buy", "mort_acc", "num_actv_bc_tl",
        "num_bc_sats", "num_bc_tl", "num_sats", "tot_hi_cred_lim", 
        "total_bc_limit", "total_il_high_credit_limit", "avg_cur_bal"
    ],
    "FinancialLiabilities": [
        "dti", "percent_bc_gt_75", "pub_rec_bankruptcies", "tax_liens", "hardship_flag",
        "debt_settlement_flag", "total_bal_ex_mort"
    ],
    "ExternalRiskEstimate": [
        "fico_range_low", "fico_range_high", "last_fico_range_high", "last_fico_range_low"
    ],
    "TradeOpenTime": [
        "total_cu_tl", "acc_open_past_24mths", "mths_since_recent_bc"
    ],
    "TradeQuality": [
        "pub_rec", "pct_tl_nvr_dlq"
    ],
    "TradeFrequency": [
        "mths_since_last_record", "open_acc_6m", "mo_sin_old_il_acct", "mo_sin_old_rev_tl_op", 
        "mo_sin_rcnt_rev_tl_op", "mo_sin_rcnt_tl", "num_actv_rev_tl", "num_tl_op_past_12m"
    ],
    "Delinquency": [
        "delinq_2yrs", "mths_since_last_delinq", "collections_12_mths_ex_med", 
        "mths_since_last_major_derog", "acc_now_delinq", "tot_coll_amt", "chargeoff_within_12_mths", 
        "delinq_amnt", "mths_since_recent_bc_dlq", "mths_since_recent_revol_delinq", 
        "num_accts_ever_120_pd", "num_tl_120dpd_2m", "num_tl_30dpd", "num_tl_90g_dpd_24m", 
    ],
    "Installment": [
        "int_rate", "installment", "open_act_il", "open_il_12m", "open_il_24m",
        "mths_since_rcnt_il", "total_bal_il", "il_util", "num_il_tl"
    ],
    "Inquiry": [
        "inq_last_6mths", "inq_fi", "inq_last_12m", "mths_since_recent_inq"
    ],
    "RevolvingBalance": [
        "revol_bal", "open_rv_12m", "open_rv_24m", "max_bal_bc", "total_rev_hi_lim", 
        "num_op_rev_tl", "num_rev_accts", "num_rev_tl_bal_gt_0", 
    ],
    "Utilization": [
        "revol_util", "all_util"
    ],
})


LENDING_CLUB_MAX_LEAF_NODES_DICT = {feature: 5 for feature in LENDING_CLUB_NON_CATEGORICAL_COLS}
\end{lstlisting}

\clearpage

\section{Raw Experimental Metrics}
\label{sec:raw_metrics}
See Tables~\ref{tab:auc_raw},~\ref{tab:ece_raw}~and~\ref{tab:mce_raw} for AUC, ECE and MCE respectively.

\begin{table*}[h]
        \captionof{table}{Raw AUC scores (mean over 10 CV folds) along with their ranks in parenthesis. Scores in bold are best for a particular dataset.}
        \label{tab:auc_raw}
        {\scriptsize
        \input{tables/2022-11-24_log-odds-only_individual_AUC_table.tex}
        }
\end{table*}

\begin{table*}
        \captionof{table}{Raw ECE scores (mean over 10 CV folds) along with their ranks in parenthesis. Scores in bold are best for a particular dataset.}
        \label{tab:ece_raw}
        {\scriptsize
        \input{tables/2022-11-24_log-odds-only_individual_ECE_table.tex}
        }
\end{table*}

\begin{table*}
        \captionof{table}{Raw MCE scores (mean over 10 CV folds) along with their ranks in parenthesis. Scores in bold are best for a particular dataset.}
        \label{tab:mce_raw}
        {\scriptsize
        \input{tables/2022-11-24_log-odds-only_individual_MCE_table.tex}
        }
\end{table*}

\begin{table*}
        \captionof{table}{Fraction of test examples with prediction in $\{0, 1\}$ (mean over 10 CV folds) along with their ranks in parenthesis. Scores in bold are best for a particular dataset.}
        \label{tab:full_certainty}
        {\scriptsize        \input{tables/_certainty_table.tex}
        }
\end{table*}
\clearpage
\section{Statistical Calculations}
\subsection{Performance Evaluation}
\label{sec:stats}

We follow the advice of~\cite{Demsar2006} and first conduct the Friedman omnibus test~\cite{Friedman1940} with Iman-Davenport correction~\cite{Iman1980}.
Having rejected the null hypothesis that the ranks of each of the algorithms are identical, we conduct the pairwise post-hoc analysis recommended by~\cite{JMLR:v17:benavoli16a}, whereby a Wilcoxon signed-rank test~\cite{Wilcoxon1945} is conducted with Holm’s alpha correction~\cite{Holm1979,JMLR:v9:garcia08a} to control the family-wise error rate.

The Wilcoxon signed-rank test has an associated \emph{Hodges-Lehmann estimator}~\cite{WILCOX202245}, namely, a point estimate of this observed difference across the $k$ datasets.
This estimator is the \emph{psuedomedian},  $\hat{\theta}_{\text{HL}}$, and is defined as
$\hat{\theta}_{\text{HL}} = \operatorname{median}_{1 \leq i \leq j \leq N} \{ \frac{d_i + d_j}{2} \}$,
namely the median of the pairwise \emph{Walsh averages} $\frac{d_i + d_j}{2}$.
The quantities $d_i$ and $d_j$ correspond to the difference between a fixed pair of algorithms $(\mathcal{A}, \mathcal{A}')$ in performance on the $i$\textsuperscript{th} and $j$\textsuperscript{th} of $N$ datasets respectively.
The pseudomedian constitutes a robust estimate of the difference in score between a pair of algorithms.
Care must be taken~\cite{Demsar2006} since we are implicitly assuming that the pairwise score differences between two algorithms are commensurable across datasets.
Nonetheless we believe these point estimates are useful to report for the reader to have an idea of the scales involved.
In Tables~\ref{tab:comparison_auc},~\ref{tab:comparison_ece} (Appendix)~and~\ref{tab:comparison_mce}~(Appendix) we tabulate the differences $\hat{\theta}_{\text{HL}}$ between all pairings of the $k=8$ algorithms for the AUC, ECE and MCE metrics respectively.

\paragraph{Friedman omnibus test.}
The Friedman test~\cite{Friedman1940} is a non-parametric version of the ANOVA test.
Let $r_{i,j}$ be the rank of the $i$\textsuperscript{th} of $k$ algorithms on the $j$\textsuperscript{th} of $N$ datasets, where a rank of 1 corresponds to the best algorithm, 2 the second-best, and so on.
Ties are assigned the arithmetic mean of the constituent ranks, for instance, if two algorithms are joint first, then they are assigned the rank $\frac{1 + 2}{2} = 1.5$.

The Friedman test is a comparison of the average ranks over all of the datasets, $R_i = \frac{1}{N}\sum^N_{j=1} r_{i, j}$.
Under the null hypothesis that the average ranks are all equal, the Friedman statistic
\begin{equation}
    \label{eq:friedman}
    \chi^2_F = \frac{12 N}{k (k + 1)} \qty[\sum_{i=1}^k R_i^2 - \frac{k(k+1)^2}{4}]
\end{equation}
is distributed according to $\chi^2_F$ with $k - 1$ degrees of freedom.
It is well known that the Friedman statistic is often unnecessarily conservative, so we use a more accurate $F_F$ statistic~\cite{Iman1980} defined as
\begin{equation}
    \label{eq:imandavenport}
    F_F = \frac{(N-1) \chi^2_F}{ N(k - 1)- \chi^2_F}
\end{equation}
Under the null hypothesis the $F_F$ statistic is distributed according to the $F$ distribution with $k - 1$ and $(k - 1)(N - 1)$ degrees of freedom.

\paragraph{Post-hoc analysis.}
If the null hypothesis is rejected, we proceed with pairwise comparisons between the $k$ algorithms.
Fixing such a pair, let $d_j$ be the difference in performance on the $j$\textsuperscript{th} of $N$ datasets.
The Wilcoxon signed-rank test~\cite{Wilcoxon1945} is a non-parametric version of the paired $t$-test.
The differences $d_j$ are ranked according to their absolute values with average ranks being assigned in case of ties. 
Let $R^+$ be the sum of ranks for the data sets on which the second algorithm outperformed the first, and $R^-$ the sum of ranks for the converse.
Ranks of $d_j = 0$ are split evenly among the sums. 
If there is an odd number of ties, one is ignored. More precisely
\begin{equation}
\begin{aligned}
    \label{eq:Rplusminus}
    R^{+} &= \sum_{d_j > 0} \operatorname{rank}(d_j) + \frac{1}{2}\sum_{d_j = 0} \operatorname{rank}(d_j),\\ 
    R^{-} &= \sum_{d_j < 0} \operatorname{rank}(d_j) + \frac{1}{2}\sum_{d_j = 0} \operatorname{rank}(d_j)
\end{aligned}
\end{equation}
Let $T$ be the smaller of the sums, $T = \min(R^+, R^-)$.
At $\alpha=0.05$ the exact critical value for $N=24$ is 81, that is if the smaller of $R^+$, $R^-$ is less that 81, we reject the null hypothesis.
For exact $p$-values under the null hypothesis that $d_j=0$ for all $j \in \{1, \ldots, N\}$, we use the precomputed values provided in the \texttt{scipy.stats.wilcoxon} python module~\cite{2020SciPy-NMeth}.
The $k(k - 1) / 2$ $p$-values are computed in this manner between each pair of the $k$ algorithms.

In Holm’s method of multiple hypothesis testing, the individual $p$-values are compared with adjusted $\alpha$ values as follows.
First, the $p$-values are sorted so that $p_1 \leq p_2 \leq \ldots \leq p_{k(k-1)/2}$.
Then, each $p_i$ is compared to $\frac{\alpha}{ k(k-1)/2 - i + 1}$ sequentially. 
So the most significant $p$-value, $p_1$, is compared with $\frac{\alpha}{k(k-1)/2}$. 
If $p_1$ is below $\frac{\alpha}{k(k-1)/2}$, the corresponding hypothesis is rejected and we continue to compare $p_2$ with $\frac{\alpha}{k(k-1)/2 - 1}$, and so on. 
As soon as a certain null
hypothesis cannot be rejected, all the remaining hypotheses must be retained as well. 

\subsection{Trinomial Test}
\label{sec:trinomial}

For the Preference Task, the data correspond to matched pairs, each pair belonging to one participant and corresponding to a preference of logistic models vs LAM models, that is, possible responses are $\{A \prec B, A \succ B, A \sim B \}$, with Model $A$ corresponding to LAM and Model $B$ to logistic.  
As there is no numerical or ranked comparison, the usual appropriate statistical test is the Sign Test~\cite{sign_test}.
Given there are many ties $A \sim B$, we use the Trinomial Test of~\cite{BIAN20111153}, specially developed for this regime.
Let $p_A$ denote the probability a randomly chosen participant prefers LAMs to logistic models and let $p_B$ denote the converse.
Moreover, $p_0$ is the probability neither is preferred.
Then our null hypothesis is $\text{H}_0 : p_A = p_B$ and alternative is  $\text{H}_1 : p_A > p_B$.
Let $N_A$, $N_B$ and $N_0$ be the random variables denoting the observed counts corresponding to Model A preferred, Model B preferred and neither respectively, with $N := N_A + N_B + N_0$.
Assuming $\text{H}_0$, the test statistic $N_d = N_A - N_B$ follows the distribution
\begin{equation*}
P(N_d = n_d) =   \sum_{k=0}^{\left\lfloor{\frac{n - n_d}{2}}\right\rfloor}\frac{n!}{(n_d + k)!k!(n - n_d - 2k)!}\qty(\frac{1 - p_0}{2})^{n_d + 2k} (p_0)^{n - n_d - 2k},
\end{equation*}
where for $p_0$ the unbiased estimate $n_0 / n$ is used in practice.
The critical value of the test statistic $N_d$ at $\alpha = 0.05$ is $C_{0.05} = 5$.

\clearpage
\section{Calibration Differences Across Classifiers}
Tables~\ref{tab:comparison_ece}~and~\ref{tab:comparison_mce} respectively show point estimates for the difference in performance of the models under consideration for ECE and MCE calibration metrics.

\setlength{\tabcolsep}{2pt}
\begin{table}[h]
\centering
        \caption{Difference in ECE scores between classifiers. Value in cell $(i, j)$ corresponds to pseudomedian over all datasets of classifier $i$ cross-validated score minus classifier $j$ cross-validated score. Bold values indicate a difference that is statistically significant.}
        \label{tab:comparison_ece}
        {\footnotesize
        \input{tables/2022-11-24_log-odds-only_ECE_pseudomedian_mean_score_diff_matrix.tex}
        }
\end{table}

\begin{table}[h]
\centering
        \caption{Difference in MCE scores between classifiers. Value in cell $(i, j)$ corresponds to pseudomedian over all datasets of classifier $i$ cross-validated score minus classifier $j$ cross-validated score. Bold values indicate a difference that is statistically significant.}
        \label{tab:comparison_mce}
        {\footnotesize
        \input{tables/2022-11-24_log-odds-only_MCE_pseudomedian_mean_score_diff_matrix.tex}
        }
\end{table}

\clearpage
\onecolumn

\includepdf[nup=2x2,pages=1-4,scale=.79,pagecommand={\section{User Study Questions}\label{pdf:user_survey_questions}},linktodoc=true]{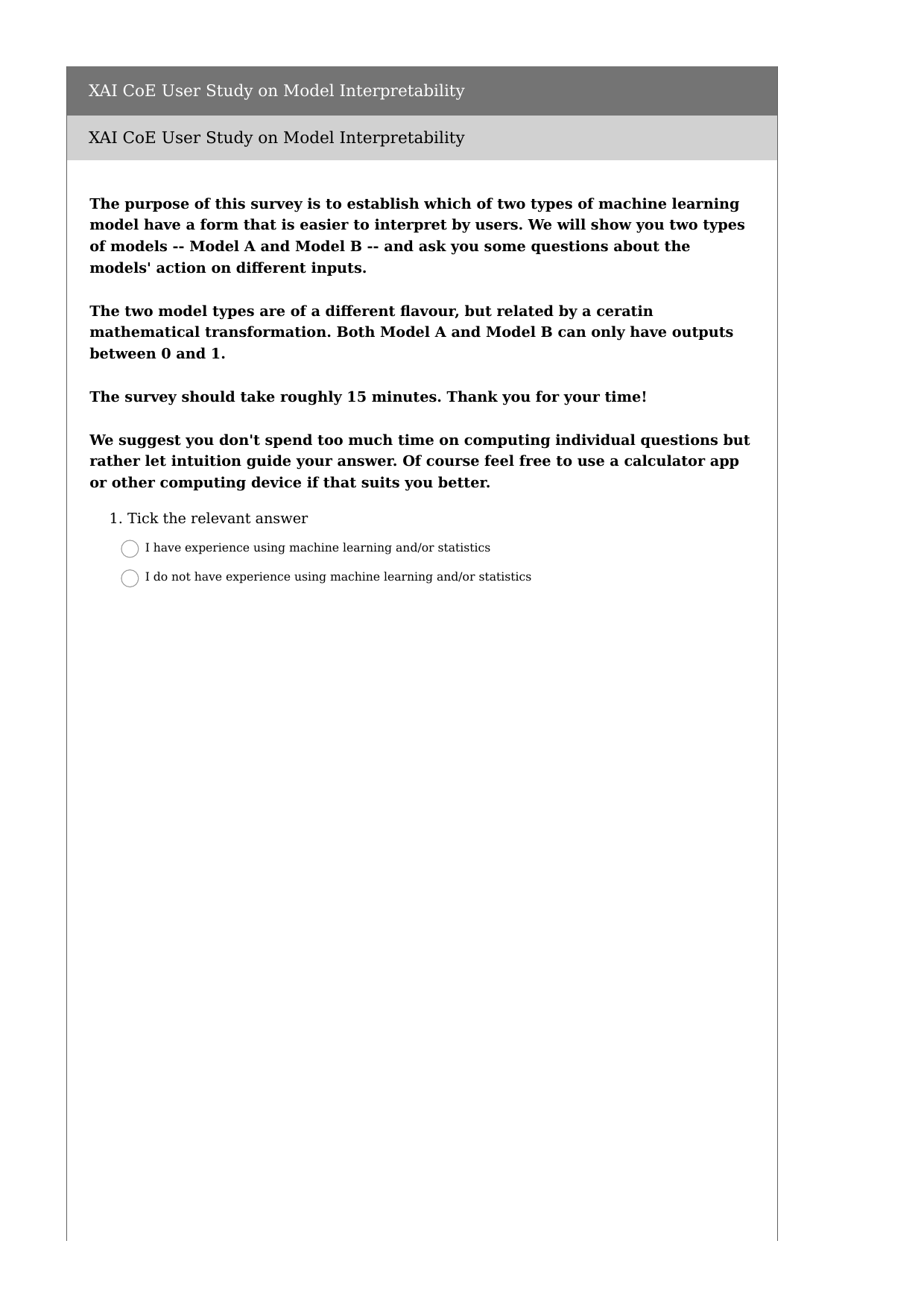}
\includepdf[nup=2x2,pages=5-,scale=.79,pagecommand={},linktodoc=true]{plots/user_study/XAI_CoE_User_Study_Questions.pdf}

\bibliographystyleSM{nicebib-alpha}
{\small
\bibliographySM{references_2}
}

\end{document}